\documentclass[11pt,a4paper]{book}

% ==========================
% PREAMBLE
% ==========================

% Encoding and fonts
\usepackage[utf8]{inputenc}
\usepackage[T1]{fontenc}
\usepackage{lmodern}

% Page layout
\usepackage{geometry}
\geometry{margin=1in}
\usepackage{setspace}
\onehalfspacing

% Math packages
\usepackage{amsmath, amssymb, amsthm, mathtools}
\usepackage{thmtools}
\usepackage{bm}           % bold math symbols

\usepackage{booktabs}
% Graphics & figures
\usepackage{tikz}
\usepackage{graphicx}
\graphicspath{{figures/}}

% References & citations
\usepackage[numbers,sort&compress]{natbib} % numeric style: [1,2,3]
\usepackage{hyperref}
\usepackage[nameinlink]{cleveref}

% Lists
\usepackage{enumitem}

% Colors
\usepackage{xcolor}

% ==========================
% Theorem Environments
% ==========================
\declaretheorem[style=plain,numberwithin=chapter]{theorem}

\declaretheorem[sibling=theorem,style=plain]{proposition}

\declaretheorem[sibling=theorem,style=definition]{definition}
\declaretheorem[sibling=theorem,style=definition]{example}
\declaretheorem[sibling=theorem,style=remark]{remark}

% ==========================
% Glossaries (abbreviations + symbols)
% ==========================
% \usepackage[acronym,shortcuts]{glossaries}
\usepackage[acronym]{glossaries}
% Define two separate glossaries
\newglossary[not]{notation}{ntn}{ndn}{Notation}
% \newglossary[acr]{acronyms}{acn}{acr}{Acronyms}
\makeglossaries

\newglossarystyle{alignednotation}{
  \setglossarystyle{long}%
}
%==============================
% NOTATION GLOSSARY + MACROS
%==============================
% --- Natural numbers ℝ ---
\newglossaryentry{naturals}{
  type=notation,
  name={\ensuremath{\mathbb{N}}},
  description={Set of natural numbers},
  sort=naturals
}
\newcommand{\N}{\ensuremath{\mathbb{N}}}
% --- Real numbers ℝ ---
\newglossaryentry{reals}{
  type=notation,
  name={\ensuremath{\mathbb{R}}},
  description={Set of real numbers},
  sort=reals
}
\newcommand{\R}{\ensuremath{\mathbb{R}}}
% --- Complex numbers ℂ ---
\newglossaryentry{complex}{
  type=notation,
  name={\ensuremath{\mathbb{C}}},
  description={Set of complex numbers},
  sort=complex
}
\newcommand{\C}{\ensuremath{\mathbb{C}}}
% --- Sample space Ω ---
\newglossaryentry{omega}{
  type=notation,
  name={\ensuremath{\Omega}},
  description={Sample space: the set of all possible outcomes of an experiment},
  sort=omega
}
\newcommand{\SampleSpace}{\ensuremath{\Omega}}

% --- Sigma-algebra 𝔽 ---
\newglossaryentry{sigma-algebra}{
  type=notation,
  name={\ensuremath{\mathcal{F}}},
  description={$\sigma$-algebra of measurable subsets of $\Omega$, closed under complements and countable unions},
  sort=sigmaalgebra
}
\newcommand{\SigmaAlg}{\ensuremath{\mathcal{F}}}

% --- Probability measure P ---
\newglossaryentry{probmeasure}{
  type=notation,
  name={\ensuremath{P}},
  description={Probability measure $P:\mathcal{F}\to[0,1]$},
  sort=probmeasure
}
\newcommand{\ProbMeasure}{\ensuremath{P}}

% --- Borel sigma-algebra ℬ(ℝ) ---
\newglossaryentry{borel}{
  type=notation,
  name={\ensuremath{\mathcal{B}(\R)}},
  description={Borel $\sigma$-algebra on $\R$, generated by open intervals.},
  sort=borel
}
\newcommand{\Borel}{\ensuremath{\mathcal{B}(\R)}}
% --- Power Set ---
\newglossaryentry{powerset}{
  type=notation,
  name={\ensuremath{2^{\SampleSpace}}},
  description={\emph{Power set} of $\SampleSpace$: the set of all subsets of $\SampleSpace$},
  sort=powerset
}

% --- Distribution (law) P_X ---
\newglossaryentry{law}{
  type=notation,
  name={\ensuremath{P_X}},
  description={Distribution (law) of a random variable $X$, defined by $P_X(B)=P(X^{-1}(B))$},
  sort=law
}
\newcommand{\Dist}[1]{\ensuremath{P_{#1}}}

% --- Expectation E ---
\newglossaryentry{expectation}{
  type=notation,
  name={\ensuremath{\mathbb{E}}},
  description={Expectation (mean) operator. $\E{X}=\int x\,dP_X(x)$},
  sort=expectation
}

\newcommand{\E}[2][]{\mathbb{E}_{#1}\!\left[#2\right]}

% --- Variance Var ---
\newglossaryentry{variance}{
  type=notation,
  name={\ensuremath{\mathrm{Var}}},
  description={Variance operator. $\Var{X}=\E{(X-\E{X})^2}$},
  sort=variance
}
\newcommand{\VarOp}{\ensuremath{\mathrm{Var}}}
\newcommand{\Var}[1]{\VarOp\!\left(#1\right)}

% --- Covariance Cov ---
\newglossaryentry{covariance}{
  type=notation,
  name={\ensuremath{\mathrm{Cov}}},
  description={Covariance operator. $\Cov{X}{Y}=\E{(X-\E{X})(Y-\E{Y})}$},
  sort=covariance
}
\newcommand{\CovOp}{\ensuremath{\mathrm{Cov}}}
\newcommand{\Cov}[2]{\CovOp\!\left(#1,#2\right)}

% --- Probability function P(A) ---
\newcommand{\Prob}[1]{\ProbMeasure\!\left(#1\right)}

%------------------------------
% ACRONYMS
%------------------------------
\newcommand{\RV}[1]{\ensuremath{\bm{#1}}}
\newacronym{rv}{RV}{random variable}
\newacronym{pmf}{PMF}{probability mass function}
\newacronym{pdf}{PDF}{probability density function}
\newacronym{cdf}{CDF}{cumulative distribution function}
\newacronym{iid}{i.i.d.}{independent and identically distributed}
\newacronym{rkhs}{RKHS}{Reproducing Kernel Hilbert Space}
\newacronym{kl}{KL}{Kullback--Leibler divergence}

% \title{Notes on Kernel Methods and Gaussian Processes}
\title{Notes on Kernel Methods}
% \author{Your Name \\[0.5em]\small Department of X, University Y}
\author{
  Diego A. Pérez-Rosero, Danna V. Salazar-Dubois, \\ Juan C. Lugo-Rojas, and Andrés M. Álvarez-Meza, \\ German Castellanos-Dominguez \\[0.5em]
  \small Signal Processing and Recognition Group, Universidad Nacional de Colombia
}
\date{\today}

\begin{document}

\maketitle
\tableofcontents

% Chapters
% \input{chapters/intro}
\chapter{Mathematical and Statistical Preliminaries}
\label{ch:pre}

\section{Probability Spaces and Random Variables}
\label{sec:probability}

To rigorously describe random phenomena, we begin with the concept of a 
\emph{probability space}, which formalizes the notion of uncertainty and provides
the mathematical foundation for probability theory~\cite{billingsley1995probability,gut2013probability,wasserman2004all}.
This framework specifies which outcomes are possible, which subsets of outcomes
are measurable, and how numerical probabilities are consistently assigned.

\begin{definition}[Sample Space]
The \emph{sample space} is a set
\[
\SampleSpace = \{\, \omega : \text{possible outcomes of an experiment} \,\}.
\]
Each element $\omega \in \SampleSpace$ represents an \emph{elementary outcome}.
The sample space constitutes the domain on which events and \glspl{rv} are defined.
\end{definition}

\begin{example}
For a single toss of a fair coin, $\SampleSpace = \{\text{H}, \text{T}\}$.
For two coin tosses, $\SampleSpace = \{\text{HH}, \text{HT}, \text{TH}, \text{TT}\}$.
If we observe a particle’s position in $\R^3$, then $\SampleSpace = \R^3$.
\end{example}

\begin{definition}[$\sigma$-Algebra]
A \emph{$\sigma$-algebra} on $\SampleSpace$ is a collection
\(\gls{sigma-algebra} \subseteq \gls{powerset}\)
of subsets of $\SampleSpace$ satisfying:
\begin{enumerate}
    \item $\SampleSpace \in \SigmaAlg$;
    \item If $A \in \SigmaAlg$, then its complement $A^{\mathrm{c}} = \SampleSpace \setminus A$ also lies in $\SigmaAlg$;
    \item If $\{A_i\}_{i=1}^{\infty} \subseteq \SigmaAlg$, then $\bigcup_{i=1}^{\infty} A_i \in \SigmaAlg$.
\end{enumerate}
These axioms imply closure under countable intersections by De Morgan’s laws.
The pair $(\SampleSpace,\SigmaAlg)$ is called a \emph{measurable space}.
\end{definition}

\begin{remark}[Examples of $\sigma$-Algebras]
A $\sigma$-algebra specifies which subsets of $\SampleSpace$ are regarded as
\emph{measurable events}, i.e., events to which probability can be assigned.
We now present several illustrative examples, ordered from simplest to most commonly used.

\begin{example}[Trivial $\sigma$-Algebra]
For any sample space $\SampleSpace$, the collection
\[
    \Sigma_{\mathrm{triv}}
    = \{\,\emptyset,\;\SampleSpace\,\}
\]
is a $\sigma$-algebra.
Indeed:
\begin{itemize}
    \item it contains $\SampleSpace$;
    \item the complement of either set is still in the collection;
    \item any countable union of sets in the collection equals either
          $\emptyset$ or $\SampleSpace$.
\end{itemize}
It is the \emph{smallest} possible $\sigma$-algebra on $\SampleSpace$.
\end{example}

\begin{example}[Power Set]
The power set $\gls{powerset} = 2^{\SampleSpace}$, consisting of
\emph{all} subsets of $\SampleSpace$, is also a $\sigma$-algebra:
\begin{itemize}
    \item it contains $\SampleSpace$ by definition;
    \item complements and countable unions of subsets of $\SampleSpace$
          are still subsets of $\SampleSpace$.
\end{itemize}
Thus $2^{\SampleSpace}$ is the \emph{largest} $\sigma$-algebra on $\SampleSpace$.
This is typically used when $\SampleSpace$ is finite or countable.
\end{example}

\begin{example}[Borel $\sigma$-Algebra on $\R$]
When $\SampleSpace = \R$, one commonly works with the \emph{Borel $\sigma$-algebra},
denoted \gls{borel}.
It is defined as the \emph{collection of all sets that can be constructed
from open intervals using countable unions, countable intersections,
and complements}.
For instance:
\begin{itemize}
    \item every open interval $(a,b)$ belongs to $\Borel$;
    \item therefore every closed interval $[a,b]$, every half-line $(-\infty,a]$,
          and every finite union of intervals belongs to $\Borel$;
    \item more complicated sets formed by repeatedly applying countable unions,
          intersections, and complements also belong to $\Borel$.
\end{itemize}
The Borel $\sigma$-algebra provides the standard notion of measurability
for real-valued \glspl{rv}~\cite{billingsley1995probability,gut2013probability}.
\end{example}

\end{remark}

\begin{definition}[Probability Measure]
A \emph{probability measure} on $(\SampleSpace,\SigmaAlg)$
is a function
\[
\ProbMeasure : \SigmaAlg \longrightarrow [0,1]
\]
satisfying:
\begin{enumerate}
    \item \textbf{Non-negativity:} $\ProbMeasure(A) \ge 0$ for all $A \in \SigmaAlg$;
    \item \textbf{Normalization:} $\ProbMeasure(\SampleSpace) = 1$;
    \item \textbf{Countable additivity:} For any countable collection of pairwise disjoint events
    $\{A_i\}_{i=1}^{\infty} \subseteq \SigmaAlg$,
    \[
        \ProbMeasure\!\left(\bigcup_{i=1}^{\infty} A_i\right)
        = \sum_{i=1}^{\infty} \ProbMeasure(A_i).
    \]
\end{enumerate}
The triplet $(\SampleSpace,\SigmaAlg,\ProbMeasure)$ is called a \emph{probability space}.
\end{definition}

\begin{remark}[Basic Consequences]
From these axioms follow the fundamental properties of measures:
\begin{itemize}
    \item $\ProbMeasure(\emptyset) = 0$;
    \item \emph{Monotonicity:} if $A \subseteq B$, then $\ProbMeasure(A) \le \ProbMeasure(B)$;
    \item \emph{Finite additivity:} if $A,B \in \SigmaAlg$ are disjoint, 
          then $\ProbMeasure(A \cup B)=\ProbMeasure(A)+\ProbMeasure(B)$;
    \item \emph{Continuity from below:} if $A_n \uparrow A$, then $\ProbMeasure(A_n) \uparrow \ProbMeasure(A)$;
    \item \emph{Continuity from above:} if $A_n \downarrow A$, then $\ProbMeasure(A_n) \downarrow \ProbMeasure(A)$.
\end{itemize}
These properties ensure consistency: probabilities are additive,
non-negative, and behave continuously under limits of increasing or decreasing event sequences~\cite{billingsley1995probability,gut2013probability}.
\end{remark}

\begin{definition}[Random Variable]
\label{def:random-variable}
Let $(\SampleSpace,\SigmaAlg,\ProbMeasure)$ be a probability space,
and let $(\R,\Borel)$ denote the measurable space of real numbers with the Borel $\sigma$-algebra.
A \emph{\acrfull{rv}} is a function
\[
X : \SampleSpace \longrightarrow \R
\]
that is \emph{measurable} with respect to $\SigmaAlg$ and $\Borel$; that is,
for every Borel set $B \in \Borel$~\cite{billingsley1995probability,gut2013probability},
\[
X^{-1}(B) = \{\,\omega \in \SampleSpace : X(\omega) \in B\,\} \in \SigmaAlg.
\]
\end{definition}

\begin{remark}
The measurability condition ensures that the image of measurable sets under $X$
corresponds to measurable events in $\SampleSpace$.
Hence, probabilities involving $X$---such as $\Prob{X \le a}$ or $\Prob{a < X \le b}$---are well-defined.
Formally, the probability measure $\ProbMeasure$ on $(\SampleSpace,\SigmaAlg)$
induces a new measure $\Dist{X}$ on $(\R,\Borel)$, called the
\emph{distribution} or \emph{law} of $X$, defined by
\[
\Dist{X}(B) = \Prob{X^{-1}(B)}, \qquad B \in \Borel.
\]
Thus, the random variable $X$ transfers the probability structure from $\SampleSpace$
to the real line, allowing us to compute probabilities and expectations directly in $\R$~\cite{billingsley1995probability}.
\end{remark}

\begin{definition}[Discrete and Continuous Random Variables]
A \gls{rv} $X$ is said to be:
\begin{itemize}
    \item \emph{Discrete} if its range (the set of values it may take) is countable,
          and its \gls{pmf} is defined by
          \[
              p(x) = \Prob{X = x}, \qquad x \in \R,
          \]
          satisfying $\sum_x p(x) = 1$.
    \item \emph{Continuous} if there exists a nonnegative function $p : \R \to [0,\infty)$ such that
          for all $a,b \in \R$,
          \[
              \Prob{a < X \le b}
              = \int_{a}^{b} p(t)\,dt,
              \qquad
              \int_{\R} p(x)\,dx = 1.
          \]
          In this case $p$ is called the \gls{pdf} of $X$.
\end{itemize}
\end{definition}

\begin{remark}
Both discrete and continuous \glspl{rv} fall under the same measure-theoretic framework:
the distribution $\Dist{X}$ is a probability measure on $(\R,\Borel)$.
A discrete distribution assigns positive probability only to a countable set of points,
while a continuous one is characterized by the existence of a density function $p$
such that probabilities of intervals can be obtained by integration.
In practice, many real-world distributions exhibit both discrete and continuous components~\cite{billingsley1995probability,wasserman2004all}.
\end{remark}

\section{Expectation, Variance, and Covariance}
\label{sec:expectation-variance}

In probability theory, the most fundamental numerical summaries of a random variable
are its \emph{expectation} and \emph{variance}.
Expectation captures the average or central value of a distribution,
while variance quantifies how much the random variable fluctuates around that center.
Together they constitute the first two \emph{moments} of a distribution.

\begin{definition}[Expectation and Variance]
Let $\RV{X}$ be a \gls{rv} with law $\Dist{X}$ and, when applicable,
probability mass or density function $p(x)$.
The \emph{expectation \gls{expectation}} (or \emph{mean}) of $\RV{X}$ is
\[
\E{X} =
\begin{cases}
\displaystyle \sum_{x} x\,p(x), & \text{if $\RV{X}$ is discrete},\\[0.6em]
\displaystyle \int_{\R} x\,p(x)\,dx, & \text{if $\RV{X}$ is continuous},
\end{cases}
\]
whenever the series or integral converges absolutely.
More generally, for any measurable function $f:\R\to\R$,
\[
\E{f(X)} = \int_{\R} f(x)\,d\Dist{X}(x).
\]

The \emph{variance \gls{variance}} of $\RV{X}$ is
\[
\Var{X} = \E{(X - \E{X})^2},
\]
and its square root $\sigma_X = \sqrt{\Var{X}}$ is the \emph{standard deviation}.
\end{definition}

\begin{remark}
Expectation and variance summarize two complementary aspects of a random variable:
location and dispersion.
While $\E{X}$ describes the typical or central value, 
$\Var{X}$ measures the spread of its probability mass around that value.
They are the first and second moments of the distribution, respectively~\cite{billingsley1995probability,gut2013probability}.
\end{remark}

\begin{definition}[Covariance and Independence]
Given two \glspl{rv} $X$ and $Y$ defined on the same probability space,
their \emph{covariance \gls{covariance}} is
\[
\Cov{X}{Y} = \E{(X - \E{X})(Y - \E{Y})}.
\]
They are said to be \emph{uncorrelated} if $\Cov{X}{Y}=0$.
Moreover, $X$ and $Y$ are \emph{independent} if, for all measurable sets 
$A,B\in\Borel$,
\[
\Prob{X\in A,\,Y\in B} = \Prob{X\in A}\Prob{Y\in B}.
\]
Independence implies uncorrelatedness (provided the variances are finite),
but the converse is not generally true~\cite{billingsley1995probability,gut2013probability}.
\end{definition}

\begin{remark}
Covariance measures the extent to which two random variables vary together:
positive values indicate that they tend to increase or decrease jointly,
negative values indicate opposing tendencies,
and $\Cov{X}{Y}=0$ means the variables are linearly uncorrelated.
Independence, however, is a much stronger property:
it implies that the knowledge of one variable gives no information about the other.
\end{remark}

\begin{remark}[Joint and Marginal Distributions]
When multiple \glspl{rv} are defined on the same probability space,
their combined behavior is described by a \emph{joint distribution}.
For instance, a pair $(X,Y)$ may have joint probability mass or density function $p(x,y)$,
such that for measurable sets $A,B\subseteq\R$,
\[
\Prob{X\in A, Y\in B} = 
\begin{cases}
\displaystyle \sum_{x\in A}\sum_{y\in B} p(x,y), & \text{if discrete},\\[0.6em]
\displaystyle \int_A\!\!\int_B p(x,y)\,dx\,dy, & \text{if continuous}.
\end{cases}
\]
The \emph{marginal distributions} of $X$ and $Y$ are obtained by summing or integrating out the other variable:
\[
p_X(x) = \int_{\R} p(x,y)\,dy, 
\qquad
p_Y(y) = \int_{\R} p(x,y)\,dx.
\]
These operations correspond to the projection of a multivariate measure
onto lower-dimensional spaces~\cite{billingsley1995probability,wasserman2004all}.
\end{remark}

\begin{remark}[Notation]
When a random variable $X$ is fixed or clear from context,
its distribution $\Dist{X}$ is denoted simply by $\ProbMeasure$.
Hence $\E{f(X)} = \int f(x)\,d\ProbMeasure(x)$,
and we often write $\Prob{A}$ for $\Dist{X}(A)$.
\end{remark}

The concepts introduced in this section establish the probabilistic foundation
for statistical analysis.
Having formalized probability spaces, random variables, and their key numerical
characteristics such as expectation and covariance, we now turn to the
\emph{empirical} perspective: how these theoretical quantities can be estimated
from observed data.

\section{Estimation from Data}
\label{sec:estimation}

In most practical settings, the true probability law governing a random variable
is unknown and must be inferred from data.
\emph{Statistical estimation} provides methods to approximate expectations,
variances, and model parameters from a finite sample
drawn from an underlying—often unknown—distribution~\cite{wasserman2004all,casella2002statistical}.

\subsection{Empirical Estimation of Moments}

Let $\{x_i\}_{i=1}^N$ be an independent and identically distributed (i.i.d.) sample 
from a random variable $\RV{X}$ with law $\Dist{X}$.
A natural first step is the empirical estimation of expectation and variance:
\[
\widehat{\E{X}} = \frac{1}{N} \sum_{i=1}^{N} x_i,
\qquad
\widehat{\Var{X+}} = \frac{1}{N} \sum_{i=1}^{N} (x_i - \widehat{\E{X}})^2.
\]
These quantities, known respectively as the \emph{sample mean} and
\emph{sample variance}, approximate their theoretical counterparts $\E{X}$ and $\Var{X}$~\cite{wasserman2004all,casella2002statistical}.

\begin{remark}[Law of Large Numbers]
Let $(X_i)_{i=1}^\infty$ be \gls{iid} \glspl{rv}
with common mean $\E{X_i} = \mu$.
Define the sample mean
\[
    \widehat{\E{X}}_N = \frac{1}{N}\sum_{i=1}^N X_i.
\]

\begin{itemize}
    \item Weak Law of Large Numbers (WLLN).
    If $\E[|X_i|] < \infty$, then
    \[
        \widehat{\E{X}}_N \xrightarrow[]{\mathbb{P}} \mu,
    \]
    meaning that for every $\varepsilon > 0$,
    \[
        \Prob{|\widehat{\E{X}}_N - \mu| > \varepsilon} \;\longrightarrow\; 0
        \quad \text{as } N \to \infty.
    \]
    In words, the probability that the sample mean deviates from the true mean by more than
    $\varepsilon$ becomes negligible as the sample size grows.
    A sufficient condition ensuring this convergence is $\Var{X_i} < \infty$,
    via Chebyshev’s inequality.

    \item Strong Law of Large Numbers (SLLN).
    If $\E{|X_i|} < \infty$ (in particular, if $\E{X_i^2} < \infty$),
    then
    \[
        \widehat{\E{X}}_N \xrightarrow[]{\text{a.s.}} \mu,
    \]
    that is,
    \[
        \Prob{\lim_{N \to \infty} \widehat{\E{X}}_N = \mu} = 1.
    \]
    This stronger result states that for almost every realization (sample path)
    of the sequence $(X_i)$, the sample mean eventually converges to the true mean.
\end{itemize}

Thus, under mild moment assumptions, the sample mean $\widehat{\E{X}}_N$
is a consistent estimator of the population mean $\E{X_i}$:
the Weak Law ensures convergence in probability,
while the Strong Law ensures convergence for almost all sample outcomes~\cite{billingsley1995probability,gut2013probability}.
\end{remark}
\begin{remark}[Central Limit Theorem]
Let $(X_i)_{i=1}^N$ be independent and \gls{iid} \glspl{rv}
with finite mean $\E{X_i} = \mu$ and finite, nonzero variance $\Var{X_i} = \sigma_X^2$.
Define the sample mean
\[
    \widehat{\E{X}}_N = \frac{1}{N}\sum_{i=1}^N X_i.
\]

Then the \emph{Central Limit Theorem (CLT)} states that
\[
    \sqrt{N}\,\frac{\widehat{\E{X}}_N - \mu}{\sigma_X}
    \;\xrightarrow[]{\mathcal{D}}\;
    \mathcal{N}(0,1),
\]
where $\xrightarrow[]{\mathcal{D}}$ denotes \emph{convergence in distribution}.

Specifically, convergence in distribution means that for every bounded and continuous function
$f:\mathbb{R} \to \mathbb{R}$,
\[
    \E{f\big(\sqrt{N}\,(\widehat{\E{X}}_N - \mu)/\sigma_X\big)}
    \;\longrightarrow\;
    \E{f(Z)},
\]
where $Z \sim \mathcal{N}(0,1)$.
Equivalently, the \gls{cdf} of the normalized sample mean
converges pointwise to that of the standard normal distribution:
\[
    \Prob{
        \sqrt{N}\,\frac{\widehat{\E{X}}_N - \mu}{\sigma_X} \leq z
    }
    \;\longrightarrow\;
    \Phi(z),
\]
where $\Phi$ is the \gls{cdf} of $\mathcal{N}(0,1)$.

Consequently, for large $N$, the estimation error of the sample mean is approximately Gaussian
with variance $\sigma_X^2/N$.
This asymptotic normality underlies classical confidence intervals and hypothesis tests~\cite{billingsley1995probability,gut2013probability}.
\end{remark}

\subsection{Parametric Estimation and the Likelihood Principle}

The previous results show how empirical averages converge to their theoretical
counterparts, providing consistent estimators for quantities such as the mean and variance.
However, in many applications one is not only interested in these moments but in
the entire distribution of the data, which may depend on unknown parameters.
This leads to the framework of \emph{parametric estimation}~\cite{casella2002statistical,wasserman2004all}.

Suppose that observations arise from a parametric family of distributions
$\{p(x \mid \theta) : \theta \in \Theta\}$,
where $\theta$ denotes an unknown parameter (possibly multidimensional).
Given an \gls{iid} sample $\{x_i\}_{i=1}^N$, the \emph{likelihood function} is
\[
L(\theta)
  = \prod_{i=1}^N p(x_i \mid \theta),
\]
which evaluates how plausible each parameter value is in light of the data.

It is often more convenient to work with the \emph{log-likelihood},
\[
\ell(\theta)
   = \log L(\theta)
   = \sum_{i=1}^N \log p(x_i \mid \theta).
\]
The log-likelihood enjoys several advantages:
\begin{itemize}
    \item \textbf{Numerical stability.}
    Products of many probabilities can underflow; logarithms avoid this.
    \item \textbf{Additivity.}
    The log-likelihood is a sum rather than a product, simplifying analysis
    and enabling gradient-based optimization.
    \item \textbf{Same maximizer.}
    The logarithm is strictly increasing, so
    \[
        \arg\max_{\theta} L(\theta)
          = \arg\max_{\theta} \ell(\theta).
    \]
    Thus no information is lost by working in log-space~\cite{casella2002statistical}.
\end{itemize}

\paragraph{Maximum likelihood estimation.}
The principle of \emph{maximum likelihood estimation (MLE)} selects
\[
\widehat{\theta}_{\mathrm{MLE}}
    = \arg\max_{\theta \in \Theta}\, L(\theta)
    = \arg\max_{\theta \in \Theta}\, \ell(\theta),
\]
the parameter value that makes the observed sample most probable
under the assumed model~\cite{fisher1925foundations,casella2002statistical}.

\begin{remark}[Asymptotic Properties of MLE]
Under standard regularity conditions, the maximum likelihood estimator
enjoys the following large-sample properties:
\begin{itemize}
    \item \textbf{Consistency:}
    $\widehat{\theta}_{\mathrm{MLE}} \xrightarrow[]{\ProbMeasure} \theta^*$
    as $N \to \infty$.

    \item \textbf{Asymptotic normality:}
    \[
    \sqrt{N}\,\big(\widehat{\theta}_{\mathrm{MLE}} - \theta^*\big)
        \xrightarrow[]{\mathcal{D}}
        \mathcal{N}\!\left(0,\, I(\theta^*)^{-1}\right),
    \]
    where $I(\theta)$ denotes the \emph{Fisher information}.

    \item \textbf{Asymptotic efficiency.}
    The estimator attains the \emph{Cramér--Rao lower bound}:  
    if $\widehat{\theta}$ is any unbiased estimator of $\theta$, then
    \[
        \Var{\widehat{\theta}} \;\ge\; \frac{1}{I(\theta)}.
    \]
    This bound characterizes the minimum achievable variance among unbiased estimators.
\end{itemize}

Together, these properties position MLE as a central tool in both classical
and modern statistical inference~\cite{lehmann1998point,casella2002statistical}.
\end{remark}

\subsection{Example: Gaussian Mean Estimation}

Consider $\RV{X} \sim \mathcal{N}(\mu, \sigma^2)$ with known variance $\sigma^2$
and unknown mean $\mu$.
Given an observed sample $\{x_i\}_{i=1}^N$, the log-likelihood is
\[
\ell(\mu)
= -\frac{N}{2}\log(2\pi\sigma^2)
  - \frac{1}{2\sigma^2} \sum_{i=1}^{N} (x_i - \mu)^2.
\]
Maximizing $\ell(\mu)$ is equivalent to minimizing the sum of squared deviations,
yielding the closed-form estimator
\[
\widehat{\mu}_{\mathrm{MLE}}
    = \frac{1}{N} \sum_{i=1}^{N} x_i.
\]
Thus, the MLE for the Gaussian mean coincides with the sample mean.
This equivalence illustrates the deep connection between likelihood maximization
and least-squares optimization, a principle that recurs throughout statistics
and machine learning~\cite{casella2002statistical,wasserman2004all}.

\begin{remark}
In more general models, parameters may encode location, scale, shape, or mixture weights.
Closed-form solutions are rare, and numerical optimization methods
(e.g., gradient ascent, Newton methods, expectation--maximization)
are routinely employed.
Alternative frameworks—such as Bayesian inference—augment the likelihood
with prior distributions, yielding posterior distributions over parameters.
\end{remark}

\bigskip

In summary, this section connects empirical estimation---supported by
the Law of Large Numbers and Central Limit Theorem---with
model-based inference through the likelihood principle.
These tools form the bridge between probability theory and practical data analysis,
paving the way for more advanced concepts such as information measures,
geometric viewpoints, and kernel-based estimation~\cite{aronszajn1950reproducing,berlinet2004rkhs,scholkopf2002learning}.

\section{Entropy and Information Measures}
\label{sec:entropy}

Statistical moments such as the mean and variance describe specific aspects
of a distribution, but they do not fully capture its overall uncertainty.
Many problems in data analysis, coding, and machine learning require a way to
quantify how unpredictable a random variable is, or how much ``information''
is gained when an outcome is observed.
Entropy and related information-theoretic quantities provide these measures,
linking probability, uncertainty, and information~\cite{shannon1948mathematical,cover2006elements}.

\subsection{Self-Information and Shannon Entropy}

The \emph{self-information} of observing outcome $x$ from a discrete
\gls{rv} $X$ with probability mass $p(x)$ is
\[
I(x) = -\log_b p(x),
\]
where $b$ denotes the logarithm base ($b=2$ for bits and $b=e$ for nats).
Rare events carry high information (large ``surprise''), 
while common events carry little.

\begin{definition}[Shannon Entropy]
For a discrete random variable $X$, the \emph{entropy} is the expected
self-information:
\[
    H(X)
    = \E[I(X)]
    = -\sum_x p(x)\,\log_b p(x).
\]
For a continuous $X$ with density $p(x)$, the analogous quantity is the
\emph{differential entropy}~\cite{shannon1948mathematical,cover2006elements}
\[
    h(X) = -\int_{\R} p(x)\,\log_b p(x)\,dx.
\]
\end{definition}

\begin{remark}
Entropy measures the average uncertainty of a random variable.
For example, a uniform distribution on $M$ equally likely outcomes
achieves the maximum entropy $H(X)=\log_b M$,
while a degenerate distribution with all mass on a single point has entropy $0$.
\end{remark}

\subsection{Joint, Conditional, and Mutual Information}

Entropy naturally extends to multiple random variables.
Let $X$ and $Y$ be discrete with joint pmf $p(x,y)$.
\begin{itemize}
    \item \emph{Joint entropy}:
    \[
        H(X,Y) = -\sum_{x,y} p(x,y)\log_b p(x,y)
    \]
    measures the total uncertainty of the pair~\cite{cover2006elements}.
    \item \emph{Conditional entropy}:
    \[
        H(Y\mid X) = -\sum_{x,y} p(x,y)\log_b p(y\mid x)
    \]
    measures the remaining uncertainty in $Y$ once $X$ is known.
    \item \emph{Mutual information}:
    \[
        I(X;Y)
        = H(X) + H(Y) - H(X,Y)
    \]
    quantifies the reduction in uncertainty in $X$ due to observing $Y$
    (and vice versa).
\end{itemize}

\begin{remark}
Mutual information is always nonnegative and equals zero if and only if
$X$ and $Y$ are independent.
Thus it provides a general measure of statistical dependence,
capturing nonlinear relationships that variance-based measures (like covariance)
may miss~\cite{cover2006elements}.
\end{remark}

\subsection{Relative Entropy and Divergences}

To compare two probability distributions $P$ and $Q$ on the same domain,
information theory uses the \emph{Kullback--Leibler (KL) divergence}:
\[
D_{\mathrm{KL}}(P\|Q)
    = \sum_x P(x)\,\log_b \frac{P(x)}{Q(x)}.
\]
The continuous analogue replaces the sum by an integral.
Although asymmetric, $D_{\mathrm{KL}}$ satisfies
$D_{\mathrm{KL}}(P\|Q)\ge 0$,
with equality if and only if $P=Q$ almost everywhere~\cite{kullback1951information,cover2006elements}.

\begin{remark}
KL divergence measures how hard it is to represent data drawn from
$P$ using a coding scheme tailored to $Q$.
It appears in maximum likelihood estimation, Bayesian inference,
variational methods, and many optimization problems in machine learning.
\end{remark}

\subsection{Kolmogorov--Nagumo Means and Generalized Entropies}

Expectation can be viewed as a special case of a more general framework of
averaging, introduced by Kolmogorov and Nagumo.
Given a strictly monotonic and continuous function $g$,
the \emph{Kolmogorov--Nagumo (KN) mean} of a discrete random variable $A$ with
pmf $p(a)$ is
\[
    \E[g]{A}
    = g^{-1}\!\left(\sum_a p(a)\,g(a)\right).
\]
The choice $g(a)=a$ yields the usual arithmetic mean.
Other choices of $g$ control how the averaging operator weights extreme,
rare, or dominant outcomes~\cite{kolmogorov1930moyenne,renyi1961measures,jizba2004generalized}.

These generalized means give rise to alternative entropy definitions.
Different transformations $g$ lead to generalized notions
of uncertainty, two important examples being Rényi and Tsallis entropies.

\subsection{Rényi and Tsallis Entropies}

For $\alpha>0$, $\alpha\neq 1$, the \emph{Rényi entropy} of order $\alpha$ is
\[
    H_\alpha(X)
    = \frac{1}{1-\alpha}
      \log_b \sum_x p(x)^{\alpha}.
\]
This corresponds to choosing $g(a)=a^{1-\alpha}$ in the KN-mean framework.
The parameter $\alpha$ adjusts sensitivity: $\alpha<1$ emphasizes rare events,
while $\alpha>1$ emphasizes likely outcomes~\cite{renyi1961measures,jizba2004generalized}.

\begin{proposition}[Convergence to Shannon Entropy]
The Rényi entropy converges to Shannon entropy as $\alpha \to 1$~\cite{renyi1961measures}:
\[
    \lim_{\alpha\to 1} H_\alpha(X)
    = -\sum_x p(x)\log_b p(x)
    = H(X).
\]
\end{proposition}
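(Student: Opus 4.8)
The plan is to treat the limit as a $0/0$ indeterminate form in the variable $\alpha$ and resolve it by L'Hôpital's rule, which here amounts to a first-order Taylor expansion of $\sum_x p(x)^\alpha$ around $\alpha=1$. Write $S(\alpha)=\sum_x p(x)^\alpha$, so that $H_\alpha(X)=\tfrac{1}{1-\alpha}\log_b S(\alpha)$. Because $S(1)=\sum_x p(x)=1$, the numerator $\log_b S(\alpha)\to\log_b 1=0$ as $\alpha\to1$, while the denominator $1-\alpha\to0$; hence the ratio is genuinely of type $0/0$ and a limit argument is required rather than direct substitution.

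First I would differentiate numerator and denominator with respect to $\alpha$. Restricting the sum to those $x$ with $p(x)>0$ (the remaining terms contributing nothing under the convention $0\log 0=0$) and writing $p(x)^\alpha=\exp\!\big(\alpha\ln p(x)\big)$, one obtains $S'(\alpha)=\sum_x p(x)^\alpha\ln p(x)$ and $\tfrac{d}{d\alpha}\log_b S(\alpha)=\tfrac{1}{\ln b}\,S'(\alpha)/S(\alpha)$, whereas $\tfrac{d}{d\alpha}(1-\alpha)=-1$. L'Hôpital's rule then yields
\[
\lim_{\alpha\to1}H_\alpha(X)
=-\frac{1}{\ln b}\,\frac{S'(1)}{S(1)}
=-\frac{1}{\ln b}\sum_x p(x)\ln p(x)
=-\sum_x p(x)\log_b p(x)
=H(X),
\]
where I used $S(1)=1$, $S'(1)=\sum_x p(x)\ln p(x)$, and the change-of-base identity $\ln p(x)/\ln b=\log_b p(x)$. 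An equivalent route avoiding L'Hôpital is to expand $S(\alpha)=\sum_x p(x)\,e^{(\alpha-1)\ln p(x)}=1+(\alpha-1)\sum_x p(x)\ln p(x)+o(\alpha-1)$, take $\log_b$, and divide by $1-\alpha$ to read off the same value.

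The step I expect to be the main obstacle is justifying the term-by-term differentiation (or the uniform truncation of the expansion) when the support of $X$ is countably infinite, since this requires interchanging an infinite sum with a limit/derivative in $\alpha$. I would handle this by working on a fixed neighbourhood $\alpha\in[1-\delta,1+\delta]$ with $\delta\in(0,1)$, bounding $p(x)^\alpha\le p(x)^{1-\delta}$, and checking that $x\mapsto p(x)^{1-\delta}\,|\ln p(x)|$ is summable, so that dominated convergence (or the Weierstrass $M$-test) legitimises the interchange; I would also assume $H(X)=-\sum_x p(x)\log_b p(x)$ is finite, without which the statement has no content. For a finite alphabet all of these conditions hold automatically and the L'Hôpital computation above is already a complete proof.
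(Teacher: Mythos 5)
Your argument is correct and follows essentially the same route as the paper's proof: recognising the $0/0$ form and applying L'H\^opital's rule (equivalently a first-order expansion of $\sum_x p(x)^\alpha$ at $\alpha=1$), using $S(1)=1$ and $S'(1)=\sum_x p(x)\ln p(x)$. Your additional justification of the sum--derivative interchange for countably infinite supports is a welcome refinement the paper leaves implicit, but it does not change the method.
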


\begin{proof}
The proof follows by applying L'Hôpital's rule to the defining expression.
\[
\begin{aligned}
\lim_{\alpha \to 1} H_\alpha(X)
&= \lim_{\alpha \to 1} 
   \frac{1}{1-\alpha}\log_b\!\left(\sum_x p(x)^\alpha\right)  \\[0.25em]
&= -\frac{1}{\ln(b)}
   \lim_{\alpha\to 1}
   \frac{\sum_x p(x)^\alpha \ln p(x)}{\sum_x p(x)^\alpha}  \\[0.25em]
&= -\sum_x p(x)\log_b p(x).
\end{aligned}
\]
\end{proof}

A closely related quantity is the \emph{Tsallis entropy},
\[
    S_q(X) = \frac{1 - \sum_x p(x)^q}{q-1},
\]
which also reduces to Shannon entropy as $q\to 1$.
Where Rényi entropy keeps a logarithmic averaging structure,
Tsallis entropy uses a linearized form, often applied in
statistical physics and robust statistical modeling~\cite{tsallis1988possible,jizba2004generalized}.

Entropy and its generalizations provide a unified language for expressing
uncertainty, diversity, and information.
They appear in areas ranging from coding theory and thermodynamics
to modern machine learning, where they motivate loss functions,
regularization strategies, and information-based criteria.
In later chapters, these concepts will connect to geometric and functional
perspectives, offering deeper insights into how similarity and uncertainty
can be quantified in high-dimensional or continuous spaces.

\chapter{Hilbert Spaces and the Geometry of Random Variables}
\label{chap:hilbert_spaces}

Random variables can be studied not only through algebra and calculus,
but also through geometry.
By viewing random variables as elements of an inner-product space,
statistical concepts such as expectation, variance, and regression
acquire geometric interpretations.
This chapter introduces Hilbert spaces as the mathematical setting in which
these ideas naturally arise~\cite{parthasarathy2005introduction,ash1972probability,scholkopf2002learning}.

\section{Hilbert Spaces: Definition and Geometry}
\label{sec:hilbert_definitions}

Hilbert spaces generalize Euclidean geometry to possibly infinite dimensions.
They provide a unifying framework for geometry, functional analysis, and
statistics, in which notions such as distance, angle, orthogonality, and
projection are defined through an inner product~\cite{debnath2005hilbert,reed1980functional,conway2007course}.

\begin{definition}[Metric Space]
A \emph{metric space} is a pair $(M,d)$ consisting of a set $M$ and a function
$d : M \times M \to [0,\infty)$, called a \emph{metric}, satisfying for all
$x,y,z \in M$:
\begin{enumerate}
    \item \textbf{Positivity:} $d(x,y) \ge 0$ and $d(x,y)=0$ if and only if $x=y$;
    \item \textbf{Symmetry:} $d(x,y) = d(y,x)$;
    \item \textbf{Triangle inequality:} 
          $d(x,z) \le d(x,y) + d(y,z)$.
\end{enumerate}
The metric $d(x,y)$ specifies the distance between $x$ and $y$.
\end{definition}

\begin{definition}[Normed Space]
Let $V$ be a vector space over $\mathbb{K}\in\{\R,\C\}$.
A \emph{norm} is a function $\|\cdot\| : V \to [0,\infty)$ satisfying, for all
$x,y\in V$ and all scalars $\alpha\in\mathbb{K}$:
\begin{enumerate}
    \item \textbf{Positivity:} $\|x\| \ge 0$ and $\|x\|=0$ if and only if $x=0$;
    \item \textbf{Homogeneity:} $\|\alpha x\| = |\alpha|\,\|x\|$;
    \item \textbf{Triangle inequality:} $\|x+y\| \le \|x\| + \|y\|$.
\end{enumerate}
A pair $(V,\|\cdot\|)$ is called a \emph{normed space}.  
Every norm induces a metric via $d(x,y)=\|x-y\|$.
\end{definition}

\begin{definition}[Inner Product Space]
Let $\mathcal{H}$ be a vector space over $\mathbb{K} \in \{\R,\C\}$.
An \emph{inner product} on $\mathcal{H}$ is a mapping
\[
    \langle \cdot , \cdot \rangle : \mathcal{H} \times \mathcal{H} \to \mathbb{K}
\]
satisfying, for all $f,g,h \in \mathcal{H}$ and scalars $a,b \in \mathbb{K}$:
\begin{enumerate}
    \item \textbf{Linearity in the first argument:}
          $\langle a f + b g , h \rangle
           = a\langle f,h\rangle + b\langle g,h\rangle$;
    \item \textbf{Conjugate symmetry:}
          $\langle f,g\rangle = \overline{\langle g,f\rangle}$;
    \item \textbf{Positive definiteness:}
          $\langle f,f\rangle \ge 0$, with equality if and only if $f=0$.
\end{enumerate}
\end{definition}

An inner product induces a norm
\[
    \|f\| = \sqrt{\langle f,f\rangle},
\]
and therefore a metric $d(f,g)=\|f-g\|$.  
Thus every inner-product space is a normed space and, consequently, a metric space.

\begin{definition}[Cauchy Sequence]
A sequence $(f_n)_{n\ge 1}$ in a normed space $(V,\|\cdot\|)$ is called a
\emph{Cauchy sequence} if, for every $\varepsilon > 0$, there exists
$N\in\mathbb{N}$ such that
\[
    \| f_n - f_m \| < \varepsilon
    \qquad\text{for all } n,m \ge N.
\]
In other words, the elements of the sequence eventually become arbitrarily
close to each other with respect to the norm.
\end{definition}

\begin{definition}[Hilbert Space]
A \emph{Hilbert space} is a complete inner-product space
$(\mathcal{H},\langle\cdot,\cdot\rangle)$,
meaning that every Cauchy sequence in $\mathcal{H}$ converges to a limit
that also belongs to $\mathcal{H}$~\cite{debnath2005hilbert,reed1980functional,conway2007course}.
\end{definition}

Completeness ensures that limit operations familiar from $\R^n$ extend to
infinite-dimensional settings.  
Since an inner product induces both a norm and a metric,
Hilbert spaces sit at the top of the following hierarchy:
\[
\text{Hilbert space}
\;\Longrightarrow\;
\text{inner-product space}
\;\Longrightarrow\;
\text{normed space}
\;\Longrightarrow\;
\text{metric space}.
\]

\begin{example}[Classical Hilbert Spaces]
We illustrate two fundamental Hilbert spaces that arise throughout analysis and
probability.

\begin{itemize}
    \item \textbf{The space $\ell^2$ of square-summable sequences.}

    This space consists of all infinite sequences of real or complex numbers
    \[
        x = (x_1,x_2,\dots)
        \qquad\text{such that}\qquad
        \sum_{n=1}^\infty |x_n|^2 < \infty.
    \]
    The inner product
    \[
        \langle x , y\rangle
        = \sum_{n=1}^\infty x_n \,\overline{y_n}
    \]
    generalizes the Euclidean dot product to infinitely many coordinates.
    The condition $\sum |x_n|^2 < \infty$ guarantees that the series defining the
    inner product converges.

    Intuitively, $\ell^2$ contains sequences whose ``energy'' or ``length''
    remains finite in the limiting sense.
    Completeness here means that any Cauchy sequence of such vectors converges
    to another square-summable sequence.

    \item \textbf{The space $L^2(\Omega,\mathcal{F},\mu)$ of square-integrable functions.}

    This space consists of all measurable functions
    \[
        f : \Omega \to \mathbb{K}
        \qquad\text{such that}\qquad
        \int_\Omega |f|^2\, d\mu < \infty.
    \]
    Two functions are considered the same element of $L^2$ if they differ only
    on a set of measure zero.  
    The inner product is
    \[
        \langle f , g \rangle
        = \int_\Omega f(\omega)\,\overline{g(\omega)}\, d\mu(\omega).
    \]

    This space generalizes the idea of the dot product to functions:
    instead of summing coordinate products, we integrate pointwise products.
    The requirement $\int |f|^2 < \infty$ ensures finite “energy’’ or
    “signal strength.’’

    A key fact is that $L^2$ is complete: limits of mean-square convergent
    sequences of functions always remain in the space.  
    This makes $L^2$ the natural home for random variables with
    finite variance~\cite{debnath2005hilbert,reed1980functional,parthasarathy2005introduction}.
\end{itemize}
\end{example}

\begin{remark}[Geometric Interpretation]
Inner-product spaces extend Euclidean geometry:
\begin{itemize}
    \item \textbf{Orthogonality:}
          $f \perp g$ if $\langle f,g\rangle = 0$, meaning the vectors
          ``point in independent directions.''
    \item \textbf{Pythagorean identity:}
          if $f \perp g$, then
          \[
              \|f+g\|^2 = \|f\|^2 + \|g\|^2.
          \]
          This generalizes the familiar right-triangle relation in $\R^2$ and $\R^3$.
    \item \textbf{Projection theorem:}
          for any closed subspace $\mathcal{M}$ and any $f\in\mathcal{H}$,
          there exists a unique $f_\mathcal{M}\in\mathcal{M}$ minimizing
          $\|f-g\|$ over $g\in\mathcal{M}$.
          Moreover, the error $f - f_\mathcal{M}$ is orthogonal to $\mathcal{M}$.
          This is the geometric principle underlying least-squares approximation~\cite{conway2007course,debnath2005hilbert}.
\end{itemize}

These ideas are not merely formal: in the next section we apply them to
$L^2(\Omega)$, where random variables behave exactly like vectors and
statistical quantities acquire clear geometric meaning.
\end{remark}

\section{The Hilbert Space \texorpdfstring{$L^2$}{L2} and Probabilistic Geometry}
\label{sec:L2_geometry}

\subsection{Definition of Measure Spaces}

\begin{definition}[Measure Space]
Let $X$ be a set and let $\mathcal{A}$ be a $\sigma$-algebra of subsets of $X$.

A \emph{measure} on $(X,\mathcal{A})$ is a function
\[
    \mu : \mathcal{A} \to [0,\infty]
\]
that assigns a nonnegative ``size'' or ``weight'' to each measurable set and
satisfies the property of \emph{countable additivity}:
whenever $(A_i)_{i=1}^\infty$ are pairwise disjoint sets in $\mathcal{A}$,
\[
    \mu\!\left(\bigcup_{i=1}^\infty A_i\right)
    = \sum_{i=1}^\infty \mu(A_i).
\]
The value $\mu(\emptyset)=0$ follows automatically from this axiom.

A triplet $(X,\mathcal{A},\mu)$ consisting of a set, a $\sigma$-algebra, and a
measure is called a \emph{measure space}~\cite{billingsley1995probability,ash1972probability}.
\end{definition}

To describe random variables using geometric ideas, we need a space in which
functions can be compared, measured, and combined much like vectors in
$\R^n$.
$L^p$ spaces provide this framework.
We begin with the general idea, and then specialize to probability~\cite{reed1980functional,conway2007course,bogachev2007measure}.

\subsection{Definition of \texorpdfstring{$L^p$}{Lp} Spaces}

Let $(X,\mathcal{A},\mu)$ be a measure space and let $1 \le p < \infty$.
A measurable function $f : X \to \mathbb{K}$ (with $\mathbb{K}\in\{\R,\C\}$) is said
to be \emph{$p$-integrable} if
\[
    \int_X |f(x)|^p\, d\mu(x) < \infty.
\]
Two measurable functions that differ only on a set of measure zero are identified,
since they are indistinguishable from the point of view of integration.

\begin{definition}[$L^p$ Space]
For $1 \le p < \infty$, the space $L^p(X,\mathcal{A},\mu)$ is defined as
\[
    L^p(X,\mathcal{A},\mu)
    = \big\{ f : X \to \mathbb{K} \text{ measurable } \big|\,
       \int_X |f|^p\, d\mu < \infty \big\},
\]
where functions equal $\mu$-almost everywhere are regarded as the same element.
\end{definition}

\begin{proposition}[$L^p$ is a Normed Space]
For $1 \le p < \infty$, the quantity
\[
    \|f\|_p := \left( \int_X |f(x)|^p\, d\mu(x) \right)^{1/p}
\]
defines a norm on $L^p(X,\mu)$.
Thus $(L^p(X,\mu),\|\cdot\|_p)$ is a normed space.
\end{proposition}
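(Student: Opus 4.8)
The plan is to verify the three norm axioms from the definition of a normed space. Positivity and homogeneity are immediate: since $|f(x)|^p \ge 0$ everywhere, the integral is nonnegative, and $\|f\|_p = 0$ forces $\int_X |f|^p\,d\mu = 0$, which (because the integrand is nonnegative) means $|f|^p = 0$ $\mu$-almost everywhere, i.e.\ $f = 0$ as an element of $L^p$ (recall we identify functions equal $\mu$-a.e.); this is exactly where the equivalence-class convention is essential. Homogeneity follows by pulling the scalar out: $\int_X |\alpha f|^p\,d\mu = |\alpha|^p \int_X |f|^p\,d\mu$, and taking $p$-th roots gives $\|\alpha f\|_p = |\alpha|\,\|f\|_p$.

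The substantive step is the triangle inequality $\|f+g\|_p \le \|f\|_p + \|g\|_p$, which is \emph{Minkowski's inequality}. I would first dispose of the trivial cases ($p=1$, where it follows pointwise from $|f+g| \le |f| + |g|$ and monotonicity of the integral; or $\|f+g\|_p = 0$). For $1 < p < \infty$, the standard route is to introduce the conjugate exponent $q$ with $\tfrac1p + \tfrac1q = 1$ and prove \emph{Hölder's inequality} $\int_X |fg|\,d\mu \le \|f\|_p\,\|g\|_q$ as a lemma; Hölder in turn rests on \emph{Young's inequality} $ab \le \tfrac{a^p}{p} + \tfrac{b^q}{q}$ for $a,b \ge 0$, which is itself a consequence of the concavity of the logarithm (or convexity of the exponential). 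With Hölder in hand, the Minkowski proof proceeds by the classical trick: write
\[
\int_X |f+g|^p\,d\mu \le \int_X |f|\,|f+g|^{p-1}\,d\mu + \int_X |g|\,|f+g|^{p-1}\,d\mu,
\]
apply Hölder to each term on the right with exponents $p$ and $q$, noting that $(p-1)q = p$ so that $\bigl\||f+g|^{p-1}\bigr\|_q = \|f+g\|_p^{p/q}$, and then divide through by $\|f+g\|_p^{p/q}$ (valid once we know this quantity is finite and nonzero).

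One technical point worth flagging before the division step: we must know $\|f+g\|_p < \infty$, i.e.\ that $L^p$ is closed under addition. This follows from the pointwise bound $|f+g|^p \le (|f|+|g|)^p \le 2^p(|f|^p + |g|^p)$ (using convexity of $t \mapsto t^p$, or just $\max\{|f|,|g|\}^p \le |f|^p + |g|^p$), so the sum of two $p$-integrable functions is again $p$-integrable; this also confirms $L^p$ is genuinely a vector space, which the proposition implicitly requires. The main obstacle, then, is not any single hard estimate but the layered dependency Young $\Rightarrow$ Hölder $\Rightarrow$ Minkowski; I would present Young's and Hölder's inequalities as auxiliary lemmas (or cite them to \cite{reed1980functional,bogachev2007measure}) and then give the Minkowski argument in full, after which the three norm axioms are assembled in a short closing paragraph.
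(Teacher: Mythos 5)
Your proposal is correct and follows essentially the same route as the paper, which only gives an ``idea of the proof'': it checks positivity and homogeneity and attributes the triangle inequality to the classical H\"older--Minkowski inequalities, exactly the chain you develop. Your additional details (Young $\Rightarrow$ H\"older $\Rightarrow$ Minkowski, the $(p-1)q=p$ trick, closure of $L^p$ under addition, and the role of a.e.\ identification in definiteness) are accurate elaborations of what the paper leaves to the cited references.
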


\begin{proof}[Idea of the proof]
The expression $\|f\|_p$ behaves like a length:
\begin{itemize}
    \item $\|f\|_p\ge 0$, and $\|f\|_p=0$ only if $f$ is zero almost everywhere;
    \item scaling by a constant scales the norm: $\|\alpha f\|_p=|\alpha|\|f\|_p$;
    \item the triangle inequality $\|f+g\|_p \le \|f\|_p+\|g\|_p$ follows from
          classical inequalities due to Hölder and Minkowski.
\end{itemize}
Thus $L^p$ functions behave very much like vectors with “length’’ $\|f\|_p$~\cite{reed1980functional,conway2007course}.
\end{proof}

A deep theorem in analysis states that every $L^p$ space ($1\le p<\infty$) is
\emph{complete} with respect to this norm, meaning that Cauchy sequences of
functions have limits in the same space.
Such spaces are called \emph{Banach spaces}~\cite{reed1980functional,conway2007course}.

\begin{proposition}[$L^2$ as a Hilbert Space]
On $L^2(X,\mu)$, the quantity
\[
    \langle f,g\rangle := \int_X f(x)\,\overline{g(x)}\, d\mu(x)
\]
defines an inner product whose induced norm satisfies $\|f\|_2=\sqrt{\langle f,f\rangle}$.
Since $L^2(X,\mu)$ is also complete under this norm, it is a Hilbert space~\cite{reed1980functional,conway2007course,parthasarathy2005introduction}.
\end{proposition}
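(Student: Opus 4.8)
The plan is to verify, in order, that the proposed map $\langle f,g\rangle := \int_X f\,\overline{g}\,d\mu$ is well-defined on $L^2(X,\mu)$, that it satisfies the three inner-product axioms from the earlier definition, and that the norm it induces agrees with $\|\cdot\|_2$; completeness is then quoted from the preceding paragraph on Banach spaces ($L^2$ is complete) together with the definition of a Hilbert space. Since the hierarchy ``complete $+$ inner product $\Rightarrow$ Hilbert'' was already established, the entire content of the proof reduces to checking the inner-product axioms and the norm compatibility.

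First I would address well-definedness: for $f,g\in L^2$, the Cauchy--Schwarz inequality (equivalently, the pointwise bound $|f\overline g|\le \tfrac12(|f|^2+|g|^2)$) shows $f\overline g\in L^1(X,\mu)$, so the integral converges absolutely; I would also note that the value is unchanged if $f$ or $g$ is modified on a $\mu$-null set, so the pairing descends to equivalence classes. Next, linearity in the first argument is immediate from linearity of the integral, and conjugate symmetry follows from $\overline{\int f\overline g\,d\mu} = \int \overline f g\,d\mu = \int g\overline f\,d\mu$. For positive definiteness, $\langle f,f\rangle = \int_X |f|^2\,d\mu \ge 0$, and it equals zero exactly when $|f|^2 = 0$ $\mu$-a.e., i.e.\ when $f = 0$ as an element of $L^2$ — this is precisely where the quotient by null sets is essential, and it is the one spot where a genuine measure-theoretic fact (vanishing integral of a nonnegative function forces a.e.\ vanishing) is invoked. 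Norm compatibility is then the tautology $\sqrt{\langle f,f\rangle} = \left(\int_X |f|^2\,d\mu\right)^{1/2} = \|f\|_2$.

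Finally I would conclude: $(L^2(X,\mu), \langle\cdot,\cdot\rangle)$ is an inner-product space whose norm is the complete norm $\|\cdot\|_2$, hence it is a complete inner-product space, i.e.\ a Hilbert space, as claimed. The main obstacle here is not conceptual but bookkeeping: one must be careful that every claimed identity and inequality is stable under the a.e.-equivalence, and that the ``only if'' direction of positive definiteness is handled correctly rather than glossed over — without the identification of a.e.-equal functions the pairing would be merely a positive semidefinite form, not an inner product. Everything else (linearity, conjugate symmetry, finiteness of the integral) is routine and can be stated in a sentence each.
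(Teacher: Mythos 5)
Your proposal is correct and follows exactly the route the paper intends: the text itself gives no proof beyond citing references, relying on the earlier $L^p$ completeness statement, so your verification of well-definedness via Cauchy--Schwarz, the three inner-product axioms (with the a.e.-identification handling positive definiteness), the norm identity, and the quoted completeness of $L^2$ is precisely the standard argument being invoked. Nothing is missing, and your emphasis on the quotient by null sets as the crucial point for definiteness is well placed.
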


\begin{remark}[Why Only $L^2$ is a Hilbert Space]
A norm comes from an inner product if and only if it satisfies the
\emph{parallelogram law}:
\[
    \|u+v\|^2 + \|u-v\|^2 = 2\|u\|^2 + 2\|v\|^2.
\]
The $L^2$ norm satisfies this identity, allowing one to reconstruct the inner
product from $\|\cdot\|_2$.
For $p\neq 2$, the norm $\|\cdot\|_p$ does \emph{not} satisfy this law (even in
simple spaces like $\R^2$), so it cannot arise from an inner product.
Thus $L^p$ is a Hilbert space if and only if $p=2$~\cite{reed1980functional,conway2007course,debnath2005hilbert}.
\end{remark}

\subsubsection*{Specialization to Random Variables}

In probability theory, the underlying measure space is a probability space
$(\Omega,\mathcal{F},P)$.
In this case,
\[
    L^p(\Omega) := L^p(\Omega,\mathcal{F},P)
    = \{ X : \Omega\to\R \,\mid\, \E{|X|^p} < \infty \},
\]
so $L^p(\Omega)$ consists precisely of $p$-integrable random variables.
For $p=2$, the space $L^2(\Omega)$ is a Hilbert space under the inner product
\[
    \langle X,Y\rangle = \E{XY}.
\]
This makes it an ideal setting for a geometric interpretation of expectation,
variance, covariance, and regression~\cite{parthasarathy2005introduction,ash1972probability}.

\subsection{Expectation, Variance, and Covariance in \texorpdfstring{$L^2$}{L2}}

In $L^2(\Omega)$, random variables behave like vectors.
Statistical quantities correspond to their geometric relationships~\cite{parthasarathy2005introduction,ash1972probability}:
\begin{itemize}
    \item \textbf{Expectation:} 
          $\E{X}$ is the projection of $X$ onto the space of constant functions.
    \item \textbf{Variance:} 
          $\Var{X}=\|X - \E{X}\|_2^2$ is the squared distance from $X$ to its mean.
    \item \textbf{Covariance:} 
          $\Cov{X}{Y}=\langle X-\E{X},\,Y-\E{Y}\rangle$.
\end{itemize}

\begin{remark}
$X$ and $Y$ are uncorrelated precisely when they are orthogonal in $L^2$.
Independence always implies orthogonality, but the converse need not hold~\cite{parthasarathy2005introduction,ash1972probability}.
\end{remark}

\subsection{Linear Regression as a Projection Problem}

Let $\mathcal{H}$ be the subspace spanned by $\{1,X_1,\dots,X_d\}$.
Given a response variable $Y$, the best linear predictor of $Y$
in the mean-square sense is the orthogonal projection of $Y$ onto $\mathcal{H}$:
\[
\widehat{Y}
    = \E{Y} + \sum_{j=1}^d \beta_j (X_j - \E{X_j}),
\]
where the coefficients $\beta_j$ minimize $\E{(Y-\widehat{Y})^2}$.
The projection theorem guarantees existence and uniqueness~\cite{aitken1935least,frisch1934statistical,searle1971linear,parthasarathy2005introduction}.

\begin{remark}
This shows that ordinary least squares (OLS) is fundamentally
a geometric projection problem in $L^2$,
not merely an algebraic computation~\cite{searle1971linear,seber2003linear}.
\end{remark}

\subsection{Higher-Order Moments as Tensors}

Beyond second-order structure, higher moments describe features such as skewness
and kurtosis.
For $k \ge 3$, define
\[
\mu_k(X) = \E{(X-\E{X})^k},
\qquad
T_k(X_1,\ldots,X_k)
    = \E{(X_1-\E{X_1})\cdots(X_k-\E{X_k})}.
\]
These objects can be viewed as symmetric multilinear forms,
or equivalently as tensors, encoding multi-way interactions among variables~\cite{mardia1970measures,amaratunga2001analysis}.

\begin{table}[h!]
\centering
\renewcommand{\arraystretch}{1.2}
\begin{tabular}{@{}lll@{}}
\toprule
\textbf{Moment Type} & \textbf{Definition} & \textbf{Interpretation in $L^2$} \\
\midrule
Expectation & $\E{X}$ & Projection onto constants \\
Variance & $\E{(X-\E{X})^2}$ & Squared distance to the mean \\
Covariance & $\E{(X-\E{X})(Y-\E{Y})}$ & Inner product of centered variables \\
Standardized variable & $\tilde X = \frac{X-\E{X}}{\sigma_X}$ & Lies on the $L^2$ unit sphere \\
Higher-order moment & $\E{\tilde X^k}$ & Shape information (skewness, kurtosis) \\
Moment tensor & $\E{X^{\otimes k}}$ & $k$-way dependence structure \\
\bottomrule
\end{tabular}
\caption{Moment-based quantities and their geometric interpretation in $L^2$.~\cite{mardia1970measures,amaratunga2001analysis}}
\end{table}

\medskip
\noindent
In summary, the $L^2(\Omega)$ perspective reveals that many statistical notions
have natural geometric interpretations.
This viewpoint will serve as the foundation for the next chapter, where inner
products are generalized through \emph{kernels} to extend these ideas to
nonlinear and high-dimensional settings~\cite{aronszajn1950reproducing,berlinet2004rkhs,scholkopf2002learning}.

\chapter{Kernel Methods and Reproducing Kernel Hilbert Spaces}
\label{chap:kernels}

Kernel methods generalize linear algebra and Euclidean geometry to nonlinear feature spaces.
They allow statistical and learning algorithms to operate in high- or even infinite-dimensional
spaces without explicitly computing feature coordinates~\cite{scholkopf2002learning,cristianini2000introduction,hofmann2008kernel}
.
This chapter formalizes the concepts of positive definite kernels,
\gls{rkhs}, and Hilbert--Schmidt operators,
and connects them to probabilistic notions such as covariance and entropy~\cite{aronszajn1950reproducing,berlinet2004reproducing,muandet2017kernel}
.

\section{Positive Definite Kernels}
\label{sec:positive_definite_kernels}

\subsection*{Definition and Properties}

Let $\mathcal{X}$ be a non-empty set.
A \emph{positive definite kernel} is a symmetric function
$k : \mathcal{X}\times\mathcal{X}\to\R$ satisfying
\[
\sum_{i=1}^{N}\sum_{j=1}^{N} c_i c_j\, k(x_i,x_j) \ge 0
\quad
\text{for all } N\in\N, \; x_1,\dots,x_N\in\mathcal{X}, \;
c_1,\dots,c_N\in\R.
\]
For any finite collection $\{x_i\}_{i=1}^N$, the matrix
$K=[k(x_i,x_j)]_{i,j=1}^N$ is symmetric and positive semidefinite,
and is called a \emph{Gram matrix}~\cite{aronszajn1950reproducing,berlinet2004reproducing}
.
This property ensures that $k$ behaves like an inner product in some (possibly
high-dimensional) feature space.

\begin{example}[Classical Kernels]
Common examples include:
\begin{itemize}
    \item \textbf{Linear:} $k(x,x') = \langle x, x'\rangle$,
    \item \textbf{Polynomial:} $k(x,x') = (\langle x, x'\rangle + c)^p$, with $c \ge 0$, $p\in\N$,
    \item \textbf{Gaussian (RBF):} $k(x,x') = \exp(-\|x-x'\|^2 / 2\sigma^2)$,
    \item \textbf{Laplacian:} $k(x,x') = \exp(-\|x-x'\|_1 / \sigma)$.
\end{itemize}
These kernels induce different geometries in feature space and correspond to
different notions of similarity on $\mathcal{X}$~\cite{scholkopf2002learning,cristianini2000introduction}
.
\end{example}

\subsection*{Feature Space Interpretation}

Every positive definite kernel defines an implicit mapping
\[
\Phi: \mathcal{X} \to \mathcal{H}
\]
into a Hilbert space $\mathcal{H}$ such that
\[
k(x,x') = \langle \Phi(x), \Phi(x') \rangle_{\mathcal{H}}.
\]

The vectors $\Phi(x)$ are called \emph{feature representations} of the inputs.
Within $\mathcal{H}$, the kernel $k$ plays the role of an inner product,
inducing distances, angles, and projections among feature vectors~\cite{aronszajn1950reproducing,berlinet2004reproducing}.

\begin{remark}
The \emph{kernel trick} exploits this representation:
whenever an algorithm can be expressed solely in terms of inner products
$\langle \Phi(x),\Phi(x')\rangle_{\mathcal{H}}$, one can replace these by
$k(x,x')$ and work directly with the kernel.
This allows nonlinear regression, classification, or principal component
analysis to be carried out without ever constructing the feature map
$\Phi$ explicitly~\cite{scholkopf2002learning,hofmann2008kernel}
.
\end{remark}

\begin{figure}[t]
    \centering
    \includegraphics[width=0.75\linewidth]{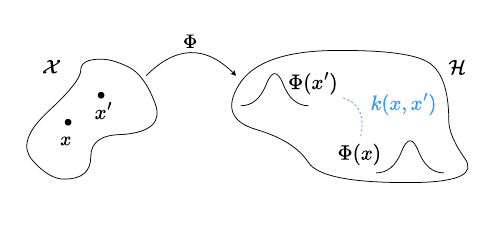}
    \caption{
    Illustration of the feature mapping $\Phi$ induced by a Gaussian kernel.
    Each input $x \in \mathcal{X}$ is mapped to a feature vector $\Phi(x)$ in
    the Hilbert space $\mathcal{H}$, and the kernel values satisfy
    $k(x,x') = \langle \Phi(x), \Phi(x') \rangle_{\mathcal{H}}$.
    }
    \label{fig:feature_mapping}
\end{figure}

\section{Reproducing Kernel Hilbert Spaces (RKHS)}
\label{sec:rkhs}

Reproducing Kernel Hilbert Spaces provide a rigorous framework for the feature
spaces induced by positive definite kernels.

\begin{definition}[Reproducing Kernel Hilbert Space]
Let $\mathcal{X}$ be a non-empty set and $\mathcal{H}$ a Hilbert space of functions 
$f:\mathcal{X}\to\R$ with inner product $\langle \cdot, \cdot \rangle_{\mathcal{H}}$.
We say $\mathcal{H}$ is a \emph{Reproducing Kernel Hilbert Space (RKHS)} 
if there exists a function $k:\mathcal{X}\times\mathcal{X}\to\R$ such that~\cite{aronszajn1950reproducing,berlinet2004reproducing}
:
\begin{enumerate}
    \item For every $x\in\mathcal{X}$, the function $k(x,\cdot)$ belongs to $\mathcal{H}$;
    \item \emph{Reproducing property:} 
    for all $f\in\mathcal{H}$ and all $x\in\mathcal{X}$,
    \[
        \langle f, k(x,\cdot)\rangle_{\mathcal{H}} = f(x).
    \]
\end{enumerate}
\end{definition}

The function $k$ is called the \emph{reproducing kernel} of $\mathcal{H}$.
The reproducing property shows that pointwise evaluation is a continuous linear
functional on $\mathcal{H}$, a fact guaranteed by the Riesz Representation
Theorem~\cite{berlinet2004reproducing}
.

\begin{theorem}[Riesz Representation Theorem]
\label{thm:riesz}
Let $\mathcal{H}$ be a Hilbert space over $\R$, 
and let $\varphi : \mathcal{H} \to \R$ be a bounded (i.e., continuous) linear functional.
Then there exists a unique element $h_\varphi \in \mathcal{H}$ such that~\cite{conway2007course,reed1980functional,debnath2005hilbert}

\[
\varphi(f) = \langle f, h_\varphi \rangle_{\mathcal{H}} 
\quad \text{for all } f \in \mathcal{H}.
\]
\end{theorem}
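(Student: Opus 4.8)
The plan is to prove the Riesz Representation Theorem by constructing the representing element $h_\varphi$ explicitly from the orthogonal complement of the kernel of $\varphi$, then verifying that it represents $\varphi$ and that it is unique. The argument splits naturally into three parts: the trivial case $\varphi \equiv 0$, the construction of $h_\varphi$ when $\varphi \neq 0$, and the uniqueness claim.

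First I would dispose of the case $\varphi = 0$ by taking $h_\varphi = 0$, for which $\langle f, 0\rangle_{\mathcal{H}} = 0 = \varphi(f)$ trivially. So assume $\varphi \not\equiv 0$. Let $\mathcal{M} = \ker\varphi = \{f \in \mathcal{H} : \varphi(f) = 0\}$. Since $\varphi$ is linear and continuous (bounded), $\mathcal{M}$ is a closed proper subspace of $\mathcal{H}$. By the Projection Theorem (stated in the Geometric Interpretation remark earlier in the excerpt), $\mathcal{H} = \mathcal{M} \oplus \mathcal{M}^{\perp}$, and since $\mathcal{M} \neq \mathcal{H}$, the orthogonal complement $\mathcal{M}^{\perp}$ contains some nonzero vector $z$; normalize so that $\|z\| = 1$. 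The key algebraic observation is that $\mathcal{M}^{\perp}$ is one-dimensional: for any $f \in \mathcal{H}$, the vector $u = \varphi(f)\,z - \varphi(z)\,f$ satisfies $\varphi(u) = \varphi(f)\varphi(z) - \varphi(z)\varphi(f) = 0$, so $u \in \mathcal{M}$, hence $\langle u, z\rangle_{\mathcal{H}} = 0$. Expanding this inner product and using $\|z\|^2 = 1$ gives $\varphi(f) = \varphi(z)\,\langle f, z\rangle_{\mathcal{H}} = \langle f,\, \overline{\varphi(z)}\, z\rangle_{\mathcal{H}}$, so setting $h_\varphi = \overline{\varphi(z)}\, z$ (which is just $\varphi(z)\, z$ in the real case treated here) works.

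For uniqueness, suppose $h_1, h_2 \in \mathcal{H}$ both represent $\varphi$, so $\langle f, h_1\rangle_{\mathcal{H}} = \langle f, h_2\rangle_{\mathcal{H}}$ for all $f$. Then $\langle f, h_1 - h_2\rangle_{\mathcal{H}} = 0$ for all $f \in \mathcal{H}$; choosing $f = h_1 - h_2$ forces $\|h_1 - h_2\|^2 = 0$, hence $h_1 = h_2$ by positive definiteness of the inner product.

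The main obstacle is the appeal to the Projection Theorem to obtain the decomposition $\mathcal{H} = \mathcal{M} \oplus \mathcal{M}^{\perp}$ with $\mathcal{M}^{\perp} \neq \{0\}$: this is where completeness of $\mathcal{H}$ is essential, since without it a closed subspace need not admit an orthogonal complement. The excerpt states the Projection Theorem for closed subspaces, so I would cite it directly, but I should be careful to first verify that $\mathcal{M} = \ker\varphi$ is genuinely closed (this uses continuity of $\varphi$, which is the hypothesis) and genuinely proper (this uses $\varphi \not\equiv 0$). Everything after the decomposition is routine linear algebra in the inner product.
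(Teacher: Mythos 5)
Your argument is correct: the paper itself gives no proof of this theorem (it is stated with citations to Conway, Reed--Simon, and Debnath--Mikusinski), and your construction via the closed kernel, the Projection Theorem, and the vector $u=\varphi(f)\,z-\varphi(z)\,f$ is exactly the standard argument found in those references, with the completeness hypothesis correctly located in the appeal to the Projection Theorem and uniqueness handled properly. The only cosmetic point is that the one-dimensionality of $\mathcal{M}^{\perp}$ is a consequence of the representation rather than a lemma your computation needs; over $\R$ the conjugate on $\varphi(z)$ is of course vacuous, as you note.
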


\begin{remark}
Riesz representation states that every continuous linear functional on a Hilbert
space can be expressed as an inner product with a unique element of that space.
Applied to the evaluation functional $\delta_x(f) = f(x)$, it guarantees the
existence of a unique element $k(x,\cdot) \in \mathcal{H}$ such that
\[
f(x) = \langle f, k(x,\cdot) \rangle_{\mathcal{H}},
\]
which is precisely the reproducing property.
\end{remark}

\begin{theorem}[Moore--Aronszajn]
For every positive definite kernel $k$ on $\mathcal{X}$, 
there exists a unique Hilbert space $\mathcal{H}_k$ of functions $f:\mathcal{X}\to\R$
such that $k$ is its reproducing kernel and~\cite{aronszajn1950reproducing}

\[
\langle k(x,\cdot), k(x',\cdot)\rangle_{\mathcal{H}_k} = k(x,x')
\quad\text{for all } x,x'\in\mathcal{X}.
\]
Conversely, every RKHS has a unique reproducing kernel of this form.
\end{theorem}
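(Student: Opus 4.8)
The plan is to prove \emph{existence} by the standard explicit construction — assemble a pre-Hilbert space out of finite kernel combinations, equip it with the only inner product compatible with reproduction, complete it, and identify the completion with a genuine space of functions on $\mathcal{X}$ — and then obtain \emph{uniqueness} by showing that any RKHS with kernel $k$ must contain this construction as a dense subspace and hence coincide with it. The converse (every RKHS has a unique reproducing kernel, of the stated inner-product form) is short and I will dispatch it at the end. So, first I would set $\mathcal{H}_0 := \operatorname{span}\{k(x,\cdot) : x\in\mathcal{X}\}$, the finite linear combinations $f=\sum_{i}\alpha_i k(x_i,\cdot)$, and define
\[
\Big\langle \sum_i \alpha_i k(x_i,\cdot),\ \sum_j \beta_j k(y_j,\cdot)\Big\rangle_{\mathcal{H}_k} := \sum_{i,j}\alpha_i\beta_j\,k(x_i,y_j).
\]
I would check this is \emph{well defined} by noting that the expression equals $\sum_j \beta_j f(y_j)$ for $f=\sum_i\alpha_i k(x_i,\cdot)$, hence depends only on the function $f$ and the representation of the second argument — and, by symmetry of $k$, only on the two functions. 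Symmetry and bilinearity are then immediate, positive semidefiniteness $\langle f,f\rangle = \sum_{i,j}\alpha_i\alpha_j k(x_i,x_j)\ge 0$ is exactly the hypothesis on $k$, and to upgrade to a true inner product I use Cauchy–Schwarz for the semi-inner-product together with the built-in reproduction on $\mathcal{H}_0$: $|f(x)|^2 = |\langle f,k(x,\cdot)\rangle_{\mathcal{H}_k}|^2 \le \langle f,f\rangle_{\mathcal{H}_k}\,k(x,x)$, so $\langle f,f\rangle_{\mathcal{H}_k}=0$ forces $f\equiv 0$.

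The delicate step — which I expect to be the main obstacle — is realizing the abstract metric completion of $(\mathcal{H}_0,\langle\cdot,\cdot\rangle_{\mathcal{H}_k})$ as an honest Hilbert space of functions on $\mathcal{X}$. The same Cauchy–Schwarz bound gives $|f_n(x)-f_m(x)|\le \|f_n-f_m\|_{\mathcal{H}_k}\sqrt{k(x,x)}$, so every Cauchy sequence in $\mathcal{H}_0$ converges pointwise; I would then define $\mathcal{H}_k$ to consist of the pointwise limit functions $f(x):=\lim_n f_n(x)$, with $\|f\|_{\mathcal{H}_k}:=\lim_n\|f_n\|_{\mathcal{H}_k}$. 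The work is to verify: (i) the limit function and its norm are independent of the approximating Cauchy sequence; (ii) the natural map from the abstract completion to this function space is injective — equivalently, $\|f_n\|_{\mathcal{H}_k}\to 0$ implies $f_n(x)\to 0$ for every $x$, which is again the Cauchy–Schwarz bound; and (iii) $\mathcal{H}_k$ so defined is complete and contains $\mathcal{H}_0$ densely. These are routine once the pointwise-evaluation bound is in hand, but care is needed to keep the identifications consistent.

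With $\mathcal{H}_k$ constructed, the reproducing property $\langle f,k(x,\cdot)\rangle_{\mathcal{H}_k}=f(x)$ extends from $\mathcal{H}_0$ to all of $\mathcal{H}_k$ by continuity of the inner product (take $f_n\to f$ in $\mathcal{H}_k$, hence pointwise, and pass to the limit), and the displayed identity $\langle k(x,\cdot),k(x',\cdot)\rangle_{\mathcal{H}_k}=k(x,x')$ is simply the case $f=k(x',\cdot)$. For uniqueness, let $\mathcal{H}$ be \emph{any} RKHS on $\mathcal{X}$ with reproducing kernel $k$: then each $k(x,\cdot)\in\mathcal{H}$ and the reproducing property gives $\langle k(x,\cdot),k(y,\cdot)\rangle_{\mathcal{H}}=k(x,y)$, so $\mathcal{H}$ contains $\mathcal{H}_0$ with precisely the inner product defined above; moreover $\mathcal{H}_0$ is dense in $\mathcal{H}$, since $f\perp\mathcal{H}_0$ would give $f(x)=\langle f,k(x,\cdot)\rangle_{\mathcal{H}}=0$ for all $x$, i.e.\ $f=0$. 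A Hilbert space containing a fixed pre-Hilbert space densely is its completion and is determined up to the canonical isometry; as both $\mathcal{H}$ and $\mathcal{H}_k$ are realized as functions on $\mathcal{X}$ sharing the same pointwise-evaluation structure, that isometry is the identity, so $\mathcal{H}=\mathcal{H}_k$.

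Finally, for the converse clause: within a fixed RKHS the reproducing kernel is unique, because if $k_1,k_2$ both reproduce then $\langle f,k_1(x,\cdot)-k_2(x,\cdot)\rangle=f(x)-f(x)=0$ for every $f$, forcing $k_1(x,\cdot)=k_2(x,\cdot)$ for every $x$; and any reproducing kernel automatically has ``this form,'' since $k(x,y)=\langle k(x,\cdot),k(y,\cdot)\rangle$ is symmetric (real inner product) and positive definite, as $\sum_{i,j}c_ic_j\,k(x_i,x_j)=\big\|\sum_i c_i k(x_i,\cdot)\big\|_{\mathcal{H}}^2\ge 0$.
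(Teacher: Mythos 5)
The paper itself gives no proof of this theorem---it is stated with a citation to Aronszajn (1950) only---so there is nothing internal to compare against; your proposal follows the standard literature argument, and in outline it is the right one: the pre-Hilbert space $\mathcal{H}_0=\operatorname{span}\{k(x,\cdot)\}$, well-definedness of the bilinear extension via $\langle f,\sum_j\beta_j k(y_j,\cdot)\rangle=\sum_j\beta_j f(y_j)$, strict positive definiteness from Cauchy--Schwarz plus the built-in reproduction, realization of the completion as a space of functions through the pointwise-evaluation bound, uniqueness by density of $\mathcal{H}_0$ in any RKHS with kernel $k$, and the short converse. All of those steps are correctly identified and most are correctly argued.

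The one genuine soft spot is the injectivity step (your point (ii)), where the implication you cite is the wrong direction. The map from the abstract completion to functions sends the class of a Cauchy sequence $(f_n)$ to its pointwise limit; injectivity therefore means: \emph{if $f_n(x)\to 0$ for every $x\in\mathcal{X}$, then $\|f_n\|_{\mathcal{H}_k}\to 0$}. What you invoke---$\|f_n\|\to 0$ implies $f_n(x)\to 0$---is the (easy) well-definedness direction and does follow from $|f_n(x)|\le\|f_n\|\sqrt{k(x,x)}$, but the injectivity direction does not follow from that bound alone. The standard fix: given $\varepsilon>0$ choose $N$ with $\|f_n-f_m\|<\varepsilon$ for $n,m\ge N$; fix $m\ge N$ and write $f_m=\sum_i\alpha_i k(x_i,\cdot)$ (a finite sum), so that $\langle f_n,f_m\rangle=\sum_i\alpha_i f_n(x_i)\to 0$ as $n\to\infty$ by the pointwise hypothesis; then $\|f_m\|^2=\langle f_m-f_n,f_m\rangle+\langle f_n,f_m\rangle\le\varepsilon\|f_m\|+o(1)$, whence $\|f_m\|\le\varepsilon$ for all $m\ge N$. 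The same argument is what actually underwrites your step (i) (independence of the norm from the representing Cauchy sequence): two non-equivalent Cauchy sequences could a priori share a pointwise limit, and ruling that out is exactly this estimate, not Cauchy--Schwarz. With that repaired, the remainder---extension of the reproducing property by continuity, uniqueness via the orthogonal-complement density argument, and the converse clause---is sound as written.
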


\begin{example}[Gaussian RKHS]
For the Gaussian kernel
\[
k(x,x') = \exp\!\left(-\frac{\|x-x'\|^2}{2\sigma^2}\right),
\]
the associated RKHS $\mathcal{H}_k$ consists of functions that can be expressed
as limits (in the RKHS norm) of finite linear combinations of Gaussian bumps:
\[
f(\cdot) = \sum_{i=1}^N \alpha_i\, k(x_i,\cdot).
\]
For two such functions
\(
f(\cdot) = \sum_{i=1}^N \alpha_i\, k(x_i,\cdot)
\)
and
\(
g(\cdot) = \sum_{j=1}^M \beta_j\, k(x'_j,\cdot),
\)
the inner product is
\[
\langle f,g\rangle_{\mathcal{H}_k}
= \sum_{i=1}^N\sum_{j=1}^M \alpha_i\beta_j\, k(x_i,x'_j).
\]~\cite{berlinet2004reproducing,scholkopf2002learning}

\end{example}

\begin{remark}
An RKHS extends linear geometry to spaces of functions:
inner products between functions become kernel evaluations between data points.
This viewpoint allows Hilbert-space tools---such as projections, angles, and
eigen-decompositions---to be applied directly in statistical learning problems.
\end{remark}

\section{Hilbert--Schmidt and Covariance Operators}
\label{sec:hs_operators}

\subsection*{Definition and Basic Properties}

Let $\mathcal{H}_1$ and $\mathcal{H}_2$ be Hilbert spaces.
A bounded linear operator $T:\mathcal{H}_1\to\mathcal{H}_2$ 
is called a \emph{Hilbert--Schmidt operator} if there exists 
an orthonormal basis $\{e_i\}$ of $\mathcal{H}_1$ such that
\[
\|T\|^2_{\mathrm{HS}}
= \sum_{i=1}^{\infty} \|T e_i\|_{\mathcal{H}_2}^2 < \infty.
\]
The quantity $\|T\|_{\mathrm{HS}}$ is called the \emph{Hilbert--Schmidt norm}~\cite{reed1980functional,debnath2005hilbert}
.

\begin{example}[Integral Operators]
Let $(\mathcal{X},\mu)$ be a measure space and $k:\mathcal{X}\times\mathcal{X}\to\R$
a square-integrable kernel.
The operator $T_k : L^2(\mathcal{X}) \to L^2(\mathcal{X})$ defined by
\[
(T_k f)(x) = \int_{\mathcal{X}} k(x,x') f(x')\, d\mu(x')
\]
is Hilbert--Schmidt, with
\[
\|T_k\|_{\mathrm{HS}}^2
= \int_{\mathcal{X}^2} |k(x,x')|^2\, d\mu(x)\,d\mu(x').
\]~\cite{reed1980functional,debnath2005hilbert}

\end{example}

\begin{remark}
Hilbert--Schmidt operators generalize finite-dimensional matrices
to infinite-dimensional settings: they are compact operators
whose entries have finite total squared magnitude, analogous to matrices with
finite Frobenius norm.
\end{remark}

\subsection*{Covariance Operators in RKHS}

Let $\mathcal{H}_k$ be an RKHS associated with a kernel $k$, and 
let $X$ be a random variable taking values in $\mathcal{X}$.
The \emph{mean embedding} of the distribution $P_X$ into $\mathcal{H}_k$ is
\[
\mu_X = \E[X]{k(X,\cdot)} \in \mathcal{H}_k.
\]~\cite{smola2007hilbert,sriperumbudur2010hilbert,muandet2017kernel}

The \emph{covariance operator} of $X$ in $\mathcal{H}_k$ is defined as
\[
C_X
= \E[X]{(k(X,\cdot) - \mu_X) \otimes (k(X,\cdot) - \mu_X)},
\]
which acts on functions $f \in \mathcal{H}_k$ via
\[
C_X f
= \E[X]{(f(X) - \E[f(X)])\, k(X,\cdot)}.
\]~\cite{smola2007hilbert,gretton2005hsic,muandet2017kernel}

\begin{remark}
The operator $C_X$ generalizes the covariance matrix to nonlinear feature spaces.
It is positive, self-adjoint, and trace-class on $\mathcal{H}_k$, and forms the
basis for kernel PCA, kernel CCA, and several dependence measures~\cite{gretton2005hsic,gretton2012kernel}
.
\end{remark}

\section{Information Measures in \glspl{rkhs}}
\label{sec:information_rkhs}

Entropy and divergence measures introduced earlier can be estimated
and interpreted geometrically in an \gls{rkhs} framework.

\subsection*{Kernel Density Estimation (KDE)}

Given samples $\{x_n\}_{n=1}^N \subset \mathcal{X}$,
the \emph{Parzen window} or \emph{kernel density estimator} is
\[
\widehat{p}(x) = \frac{1}{N}\sum_{n=1}^N k_\sigma(x, x_n),
\]
where $k_\sigma$ is a normalized positive kernel 
(e.g., a Gaussian with bandwidth $\sigma$).
This nonparametric estimate provides a flexible way to approximate the
underlying density from finite data and forms the starting point for
kernel-based entropy estimators~\cite{rosenblatt1956remarks,parzen1962estimation,silverman1986density}
.

\subsection*{Estimating Rényi Entropy with Kernels}

For Rényi entropy of order $\alpha=2$,
the empirical estimator based on KDE can be written as
\[
\widehat{H}_2(X)
= -\log \sum_{n=1}^N \widehat{p}(x_n)^2
= -\log \!\left(
\frac{1}{N^2}\sum_{n=1}^{N}\sum_{m=1}^{N} k_{\sqrt{2}\sigma}(x_n,x_m)
\right).
\]
This expression admits a natural geometric interpretation in $\mathcal{H}_k$~\cite{principe2010itl}
.

\begin{remark}
Let 
\[
\mu_{\widehat{P}}
= \frac{1}{N}\sum_{n=1}^N k_{\sqrt{2}\sigma}(x_n,\cdot)
\]
be the empirical mean embedding of the sample in the \gls{rkhs}.
Then
\[
\widehat{H}_2(X)
= -\log \|\mu_{\widehat{P}}\|_{\mathcal{H}_k}^2.
\]
Thus, the Rényi entropy of order~2 is (up to a logarithm) the squared norm of
the embedded empirical distribution:
entropy measures how spread out the sample is in feature space.
Highly concentrated embeddings correspond to low entropy; diffuse embeddings
correspond to high entropy~\cite{smola2007hilbert,muandet2017kernel}
.
\end{remark}

\subsection*{Mean Embeddings and Maximum Mean Discrepancy (MMD)}

The \gls{rkhs} mean embedding $\mu_X = \E{k(X,\cdot)}$ represents a distribution as
an element of $\mathcal{H}_k$.
Given two random variables $X$ and $Y$, their discrepancy can be quantified by the 
\emph{Maximum Mean Discrepancy} (MMD):
\[
\mathrm{MMD}^2(X,Y)
= \|\mu_X - \mu_Y\|_{\mathcal{H}_k}^2
= \E[k(X,X')] + \E[k(Y,Y')] - 2\E[k(X,Y)],
\]
where expectations are taken over independent copies
$X,X'\sim P_X$ and $Y,Y'\sim P_Y$.~\cite{gretton2012kernel,sriperumbudur2010hilbert,muandet2017kernel}

\begin{remark}
The MMD provides a kernel-based notion of distance between distributions,
analogous in spirit to Wasserstein or Kullback--Leibler divergences,
and is widely used in two-sample testing and generative modeling.
It ties together information measures and Hilbert-space geometry,
completing the bridge from probability and statistics to kernel methods~\cite{gretton2012kernel,dziugaite2015mmd}
.
\end{remark}

% \input{chapters/kernels}
% \input{chapters/rff}
% \input{chapters/gps}
% \input{chapters/applications}
% \input{chapters/conclusion}

% \printglossary[type=\acronymtype, title=List of Abbreviations]
% \printglossary[title=List of Symbols]
% References
\printglossary[type=\acronymtype, title=Acronyms]
\printglossary[type=notation,title=Notation]
\bibliographystyle{plain}
\bibliography{references}

@book{billingsley1995probability,
  author    = {Patrick Billingsley},
  title     = {Probability and Measure},
  edition   = {3},
  publisher = {John Wiley \& Sons},
  address   = {New York},
  year      = {1995}
}

@book{gut2013probability,
  author    = {Allan Gut},
  title     = {Probability: A Graduate Course},
  edition   = {2},
  series    = {Springer Texts in Statistics},
  publisher = {Springer},
  address   = {New York},
  year      = {2013}
}

@book{wasserman2004all,
  author    = {Larry Wasserman},
  title     = {All of Statistics: A Concise Course in Statistical Inference},
  series    = {Springer Texts in Statistics},
  publisher = {Springer},
  address   = {New York},
  year      = {2004}
}

@book{casella2002statistical,
  author    = {George Casella and Roger L. Berger},
  title     = {Statistical Inference},
  edition   = {2},
  publisher = {Duxbury},
  address   = {Pacific Grove, CA},
  year      = {2002}
}

@book{lehmann1998point,
  author    = {Erich L. Lehmann and George Casella},
  title     = {Theory of Point Estimation},
  edition   = {2},
  series    = {Springer Texts in Statistics},
  publisher = {Springer},
  address   = {New York},
  year      = {1998}
}

@book{cover2006elements,
  author    = {Thomas M. Cover and Joy A. Thomas},
  title     = {Elements of Information Theory},
  edition   = {2},
  publisher = {Wiley-Interscience},
  address   = {Hoboken, NJ},
  year      = {2006}
}

@article{shannon1948mathematical,
  author  = {Claude E. Shannon},
  title   = {A Mathematical Theory of Communication},
  journal = {Bell System Technical Journal},
  volume  = {27},
  pages   = {379--423, 623--656},
  year    = {1948}
}

@article{kullback1951information,
  author  = {Solomon Kullback and Richard A. Leibler},
  title   = {On Information and Sufficiency},
  journal = {Annals of Mathematical Statistics},
  volume  = {22},
  number  = {1},
  pages   = {79--86},
  year    = {1951}
}

@inproceedings{renyi1961measures,
  author    = {Alfr{\'e}d R{\'e}nyi},
  title     = {On Measures of Entropy and Information},
  booktitle = {Proceedings of the 4th {B}erkeley Symposium on Mathematical Statistics and Probability, Vol. 1},
  pages     = {547--561},
  publisher = {University of California Press},
  address   = {Berkeley, CA},
  year      = {1961}
}

@article{tsallis1988possible,
  author  = {Constantino Tsallis},
  title   = {Possible Generalization of {B}oltzmann--{G}ibbs Statistics},
  journal = {Journal of Statistical Physics},
  volume  = {52},
  number  = {1--2},
  pages   = {479--487},
  year    = {1988}
}

@book{kolmogorov1930moyenne,
  author    = {Andrey N. Kolmogorov},
  title     = {Sur la notion de la moyenne},
  publisher = {G. Bardi, Tip. della R. Accad. dei Lincei},
  address   = {Rome},
  year      = {1930}
}

@article{jizba2004generalized,
  author  = {Petr Jizba and Toshihico Arimitsu},
  title   = {Generalized Statistics: Yet Another Generalization},
  journal = {Physica A},
  volume  = {340},
  number  = {1--3},
  pages   = {110--116},
  year    = {2004}
}

@book{berlinet2004rkhs,
  author    = {Alain Berlinet and Christine Thomas-Agnan},
  title     = {Reproducing Kernel Hilbert Spaces in Probability and Statistics},
  publisher = {Kluwer Academic Publishers},
  address   = {Boston},
  year      = {2004}
}

@book{scholkopf2002learning,
  author    = {Bernhard Sch{\"o}lkopf and Alexander J. Smola},
  title     = {Learning with Kernels: Support Vector Machines, Regularization, Optimization, and Beyond},
  publisher = {MIT Press},
  address   = {Cambridge, MA},
  year      = {2002}
}

@article{fisher1925foundations,
  author  = {Ronald A. Fisher},
  title   = {Theory of Statistical Estimation},
  journal = {Proceedings of the Cambridge Philosophical Society},
  volume  = {22},
  pages   = {700--725},
  year    = {1925}
}

@book{debnath2005hilbert,
  author    = {Lokenath Debnath and Piotr Mikusi{\'n}ski},
  title     = {Introduction to {H}ilbert Spaces with Applications},
  edition   = {3},
  publisher = {Elsevier},
  address   = {Amsterdam},
  year      = {2005}
}

@book{reed1980functional,
  author    = {Michael Reed and Barry Simon},
  title     = {Functional Analysis},
  series    = {Methods of Modern Mathematical Physics},
  volume    = {1},
  edition   = {2},
  publisher = {Academic Press},
  address   = {San Diego},
  year      = {1980}
}

@book{conway2007course,
  author    = {John B. Conway},
  title     = {A Course in Functional Analysis},
  edition   = {2},
  series    = {Graduate Texts in Mathematics},
  volume    = {96},
  publisher = {Springer},
  address   = {New York},
  year      = {1990}
}

@book{ash1972probability,
  author    = {Robert B. Ash},
  title     = {Real Analysis and Probability},
  publisher = {Academic Press},
  address   = {New York},
  year      = {1972}
}

@book{bogachev2007measure,
  author    = {Vladimir I. Bogachev},
  title     = {Measure Theory},
  volume    = {1},
  publisher = {Springer},
  address   = {Berlin},
  year      = {2007}
}

@book{parthasarathy2005introduction,
  author    = {K. R. Parthasarathy},
  title     = {Introduction to Probability and Measure},
  series    = {Texts and Readings in Mathematics},
  volume    = {51},
  publisher = {Hindustan Book Agency},
  address   = {New Delhi},
  year      = {2005}
}

@book{searle1971linear,
  author    = {Shayle R. Searle},
  title     = {Linear Models},
  publisher = {Wiley},
  address   = {New York},
  year      = {1971}
}

@book{seber2003linear,
  author    = {George A. F. Seber and Alan J. Lee},
  title     = {Linear Regression Analysis},
  edition   = {2},
  series    = {Wiley Series in Probability and Statistics},
  publisher = {Wiley},
  address   = {Hoboken, NJ},
  year      = {2003}
}

@article{aitken1935least,
  author  = {Alexander C. Aitken},
  title   = {On Least Squares and Linear Combinations of Observations},
  journal = {Proceedings of the Royal Society of Edinburgh},
  volume  = {55},
  pages   = {42--48},
  year    = {1935}
}

@book{frisch1934statistical,
  author    = {Ragnar Frisch},
  title     = {Statistical Confluence Analysis by Means of Complete Regression Systems},
  publisher = {University Institute of Economics},
  address   = {Oslo},
  year      = {1934}
}

@article{mardia1970measures,
  author  = {Kanti V. Mardia},
  title   = {Measures of Multivariate Skewness and Kurtosis with Applications},
  journal = {Biometrika},
  volume  = {57},
  number  = {3},
  pages   = {519--530},
  year    = {1970}
}

@article{amaratunga2001analysis,
  author  = {Duminda Amaratunga and Javier Cabrera},
  title   = {Analysis of Data Arrays Using Metric-Space Methods},
  journal = {Journal of the American Statistical Association},
  volume  = {96},
  number  = {456},
  pages   = {1165--1174},
  year    = {2001}
}

@article{aronszajn1950reproducing,
  author  = {Aronszajn, Nachman},
  title   = {Theory of Reproducing Kernels},
  journal = {Transactions of the American Mathematical Society},
  volume  = {68},
  number  = {3},
  pages   = {337--404},
  year    = {1950},
  doi     = {10.1090/S0002-9947-1950-0051437-7}
}

@book{berlinet2004reproducing,
  author    = {Berlinet, Alain and Thomas-Agnan, Christine},
  title     = {Reproducing Kernel {H}ilbert Spaces in Probability and Statistics},
  publisher = {Kluwer Academic Publishers},
  address   = {Boston},
  year      = {2004}
}

@book{cristianini2000introduction,
  author    = {Cristianini, Nello and Shawe-Taylor, John},
  title     = {An Introduction to Support Vector Machines and Other Kernel-based Learning Methods},
  publisher = {Cambridge University Press},
  address   = {Cambridge},
  year      = {2000}
}

@article{hofmann2008kernel,
  author  = {Hofmann, Thomas and Sch{\"o}lkopf, Bernhard and Smola, Alexander J.},
  title   = {Kernel Methods in Machine Learning},
  journal = {The Annals of Statistics},
  volume  = {36},
  number  = {3},
  pages   = {1171--1220},
  year    = {2008},
  doi     = {10.1214/009053607000000677}
}

@article{parzen1962estimation,
  author  = {Parzen, Emanuel},
  title   = {On Estimation of a Probability Density Function and Mode},
  journal = {The Annals of Mathematical Statistics},
  volume  = {33},
  number  = {3},
  pages   = {1065--1076},
  year    = {1962},
  doi     = {10.1214/aoms/1177704472}
}

@article{rosenblatt1956remarks,
  author  = {Rosenblatt, Murray},
  title   = {Remarks on Some Nonparametric Estimates of a Density Function},
  journal = {The Annals of Mathematical Statistics},
  volume  = {27},
  number  = {3},
  pages   = {832--837},
  year    = {1956},
  doi     = {10.1214/aoms/1177728190}
}

@book{silverman1986density,
  author    = {Silverman, B. W.},
  title     = {Density Estimation for Statistics and Data Analysis},
  publisher = {Chapman and Hall},
  address   = {London},
  year      = {1986}
}

@inproceedings{smola2007hilbert,
  author    = {Smola, Alexander and Gretton, Arthur and Song, Le and Sch{\"o}lkopf, Bernhard},
  title     = {A {H}ilbert Space Embedding for Distributions},
  booktitle = {Algorithmic Learning Theory (ALT 2007)},
  series    = {Lecture Notes in Computer Science},
  volume    = {4754},
  pages     = {13--31},
  publisher = {Springer},
  address   = {Berlin},
  year      = {2007},
  doi       = {10.1007/978-3-540-75225-7_5}
}

@article{sriperumbudur2010hilbert,
  author  = {Sriperumbudur, Bharath K. and Gretton, Arthur and Fukumizu, Kenji and Sch{\"o}lkopf, Bernhard and Lanckriet, Gert R. G.},
  title   = {Hilbert Space Embeddings and Metrics on Probability Measures},
  journal = {Journal of Machine Learning Research},
  volume  = {11},
  pages   = {1517--1561},
  year    = {2010}
}

@inproceedings{gretton2005hsic,
  author    = {Gretton, Arthur and Bousquet, Olivier and Smola, Alexander and Sch{\"o}lkopf, Bernhard},
  title     = {Measuring Statistical Dependence with {H}ilbert--{S}chmidt Norms},
  booktitle = {Algorithmic Learning Theory (ALT 2005)},
  series    = {Lecture Notes in Computer Science},
  volume    = {3734},
  pages     = {63--77},
  publisher = {Springer},
  address   = {Berlin},
  year      = {2005},
  doi       = {10.1007/11564089_7}
}

@article{gretton2012kernel,
  author  = {Gretton, Arthur and Borgwardt, Karsten M. and Rasch, Malte J. and Sch{\"o}lkopf, Bernhard and Smola, Alexander J.},
  title   = {A Kernel Two-Sample Test},
  journal = {Journal of Machine Learning Research},
  volume  = {13},
  pages   = {723--773},
  year    = {2012}
}

@article{muandet2017kernel,
  author  = {Muandet, Krikamol and Fukumizu, Kenji and Sriperumbudur, Bharath and Sch{\"o}lkopf, Bernhard},
  title   = {Kernel Mean Embedding of Distributions: A Review and Beyond},
  journal = {Foundations and Trends in Machine Learning},
  volume  = {10},
  number  = {1--2},
  pages   = {1--141},
  year    = {2017},
  doi     = {10.1561/2200000060}
}

@book{principe2010itl,
  author    = {Principe, Jos{\'e} C.},
  title     = {Information Theoretic Learning: {R}enyi's Entropy and Kernel Perspectives},
  publisher = {Springer},
  address   = {New York},
  year      = {2010}
}

@inproceedings{dziugaite2015mmd,
  author    = {Dziugaite, Gintare Karolina and Roy, Daniel M. and Ghahramani, Zoubin},
  title     = {Training Generative Neural Networks via Maximum Mean Discrepancy Optimization},
  booktitle = {Proceedings of the Thirty-First Conference on Uncertainty in Artificial Intelligence (UAI 2015)},
  pages     = {258--267},
  year      = {2015}
}

\end{document}